\pdfoutput=1
\documentclass[11pt]{article}
\usepackage[a4paper,left=2.45cm,right=2.45cm,top=2.6cm,bottom=2.6cm,headheight=14pt]{geometry}
\usepackage{charter}
\usepackage{amsmath,amssymb,amsthm}
\usepackage{graphicx}
\usepackage{booktabs}
\usepackage{microtype}
\usepackage[dvipsnames]{xcolor}
\usepackage{enumitem}
\usepackage{fancyhdr}
\usepackage{titlesec}
\usepackage{caption}
\usepackage[round]{natbib}
\usepackage{placeins}
\usepackage[colorlinks=true,linkcolor=blue!65!black,citecolor=blue!65!black,urlcolor=blue!65!black]{hyperref}

\titleformat{\section}{\Large\bfseries}{\thesection.}{0.45em}{}
\titleformat{\subsection}{\large\bfseries}{\thesubsection.}{0.45em}{}
\titlespacing*{\section}{0pt}{14pt}{4pt}\titlespacing*{\subsection}{0pt}{10pt}{2pt}
\newcommand{\para}[1]{\textbf{#1}\hspace{0.6em}}
\DeclareCaptionLabelSeparator{bar}{\ |\ }
\setlist{itemsep=2pt,topsep=2pt}
\newcommand{\runtitle}{When Does a Diffusion Model Decide What to Draw?}
\fancypagestyle{main}{\fancyhf{}\fancyhead[C]{\small\runtitle}\fancyfoot[R]{\small\thepage}}
\fancypagestyle{first}{\fancyhf{}\fancyhead[L]{\large\bfseries Preprint}\fancyhead[R]{\small\textit{2026-10-6}}\fancyfoot[L]{\small\textit{Corresponding author: Snigdha Chandan Khilar, \href{mailto:snkhilar@gmail.com}{snkhilar@gmail.com}}}}
\newcommand{\E}{\mathbb{E}}
\newcommand{\sbif}{\sigma_{\rm bif}}
\newcommand{\tbif}{\tau_{\rm bif}}
\newtheorem{proposition}{Proposition}
\begin{document}
\thispagestyle{first}

\vspace*{2mm}
{\LARGE\bfseries When Does a Diffusion Model Decide What to Draw?\par}
\vspace{10pt}
{\normalsize\bfseries Snigdha Chandan Khilar\par}
{\small Independent Researcher\quad\textperiodcentered\quad\href{mailto:snkhilar@gmail.com}{snkhilar@gmail.com}\par}
\vspace{12pt}

{\bfseries
A diffusion model starts from pure noise and removes it step by step. Somewhere along the way it stops being able to become ``anything'' and becomes committed to, say, a horse rather than a truck. We measure when this happens, and what a trained model gets wrong about it, on CIFAR-10. The most direct measurement is to freeze a half-finished image, restart the generation from that point many times, and count how often each class comes out. We call this probability the committor. Measured this way, the model settles coarse questions (vehicle or animal?) at roughly twice the noise level of fine ones (which animal?). A much cheaper measurement, the noise level at which a classifier's opinion about two classes splits into two distinct groups, gets the order of these decisions right (rank correlation 0.73--0.88) but not their exact timing. We then compare pretrained models with their training data, using a comparison that is exact in principle. For NVIDIA's EDM model the result is simple: along the class axis, the model commits exactly as its data do, except that it generates some classes too often and others too rarely. If we give the data the model's class proportions, the data reproduce the model's behaviour pair by pair, as closely as two halves of the data reproduce each other. If we give the model the data's proportions, by sampling a class-conditional model with uniform labels or by gently steering the sampler, the model reproduces the data's behaviour, with no loss in image quality. Part of the class imbalance comes from the sampler rather than the network: a coarse 256-step sampler roughly doubles it. An older DDPM model shows the same class-proportion effect plus a second error, committing about 20\% too late, which our planted-error tests show the method is able to detect.
\par}
\vspace{4pt}
{\small Code: \url{https://github.com/nssprogrammer/diffusion-commit}\par}

\section{Introduction}

Watch a diffusion model generate an image and the picture comes into focus in a recognisable order. First there is a blur of colour, then a rough layout, then an object, and finally fine texture. Somewhere in the middle of this process the model \emph{commits}. Before that moment, the half-finished image could still turn into several different things. After it, the outcome is settled: it will be a horse, and no amount of further sampling noise will turn it into a truck.

Knowing when this happens is useful. It tells us where in the noise schedule guidance should act \citep{kynkaanniemi2024applying}, where the class balance of generated data is decided, and which errors in a learned model actually change what gets drawn. It also connects to a line of theory. For simple data such as a mixture of two Gaussian blobs, the moment of commitment is a sharp event that physicists would call a phase transition \citep{raya2023spontaneous,biroli2023generative,biroli2024dynamical,li2024critical,sclocchi2025phase}.

Measuring commitment in a real, trained model raises two practical questions that the theory hides.

\emph{Whose commitment do we mean?} One option is to look at the whole population of half-finished images and ask when they visibly split into two groups, for example ``cat-like'' and ``dog-like''. That is cheap, but what you see depends on how you choose to measure ``cat-likeness''. The other option is to look at one image at a time and ask how certain its eventual outcome already is. That is exact, but expensive. We use both and check one against the other.

\emph{Compared with what?} A model can only commit ``wrongly'' relative to its training data. So we need a fair way to compare the model's behaviour with the data's, and evidence that this comparison would actually notice an error if one existed.

We study CIFAR-10 \citep{krizhevsky2009learning}, with NVIDIA's EDM models \citep{karras2022elucidating} as the main subject and the original DDPM model \citep{ho2020denoising} as a second, older model family. Our findings are:
\begin{enumerate}[leftmargin=*]
\item \textbf{Coarse decisions come first.} An image settles ``vehicle or animal?'' at about twice the noise level at which it settles ``which animal?'' (1.75--2.5 times, depending on how strictly ``settled'' is defined). The cheap population-level measurement puts class pairs in the right order (rank correlation 0.73--0.88), but its absolute timing depends on an arbitrary choice of scale, so we use it only for ordering (Section~\ref{sec:cifar}).
\item \textbf{The comparison is fair, and it can see errors.} On a toy problem where the right answer is known, it recovers that answer exactly. When we deliberately plant small errors in the data, it detects them (Sections~\ref{sec:compare} and~\ref{sec:power}).
\item \textbf{EDM commits like its data, except for class proportions.} Reweighting the data to match how often the model generates each class reproduces the model's behaviour pair by pair, as closely as two halves of the data agree with each other. A class-conditional EDM model sampled with equal numbers of each class matches the data with no reweighting at all (Section~\ref{sec:edm}).
\item \textbf{Fixing the proportions fixes the commitment.} Gently steering the sampler towards the right class proportions, with a classifier and no tuning, makes the model's commitment match the data, and FID does not get worse (Section~\ref{sec:correct}).
\item \textbf{Part of ``the model's'' imbalance belongs to the sampler.} A 256-step stochastic sampler roughly doubles the class imbalance compared with the same network sampled more carefully (Section~\ref{sec:sampler}).
\item \textbf{A weaker model makes an additional error.} DDPM shows the same class-proportion effect, but it also commits about 20\% too late, an error that is not about proportions at all (Section~\ref{sec:ddpm}).
\end{enumerate}

\begin{table}[t]
\centering
\caption{Key terms, in the order they appear.}
\label{tab:glossary}
\small
\begin{tabular}{p{3.6cm}p{11.4cm}}
\toprule
term & meaning \\
\midrule
noise level $\sigma$ & How much Gaussian noise is mixed into an image. Generation starts at a large $\sigma$ (pure noise) and ends near $\sigma=0$ (a clean image). Higher noise means \emph{earlier} in generation. \\
commitment & The point in generation after which the final class of an image is effectively fixed. \\
committor & For a half-finished image, the probability that generation continued from it ends in a given class. Estimated by restarting generation many times from the same point. \\
decision noise level & The noise level below which a trajectory's outcome is settled, i.e. its committor stays close to 0 or 1. \\
order parameter $m$ & A single number that says how ``class $a$-like versus class $b$-like'' an image is. We use a classifier's log-odds. \\
density transition $\sbif$ & The noise level at which the distribution of $m$ across many images splits from one hump into two. \\
fold, pitchfork & The two shapes such a split can take: lopsided (one class first, the other appears later) or symmetric (both appear together). \\
asymmetry $h$ & How lopsided the split is. $h=0$ is perfectly symmetric; $|h|=0.08$ is the lopsidedness of a 54/46 class split. \\
class mass & How often each class is generated. A model with the wrong class mass makes some classes too often. \\
mass-matching & Reweighting the training data so its class proportions equal the model's, then comparing. \\
rebalancing & Reweighting the model's samples so that two classes carry equal weight, then comparing. \\
class-block interval & A confidence interval computed by resampling whole classes, since each class appears in nine of the 45 class pairs. \\
\bottomrule
\end{tabular}
\end{table}

\begin{figure}[t]
\centering
\includegraphics[width=\textwidth]{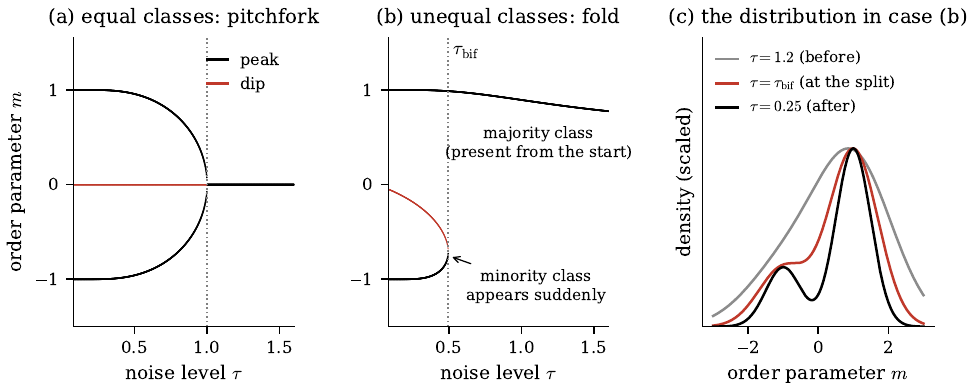}
\caption{Two ways a population of images can split into two classes. We follow the distribution of an order parameter $m$ (``how $a$-like versus $b$-like'') for a mixture of two Gaussian classes centred at $m=\pm1$, as noise is removed. The horizontal axis is $\tau$, the noise variance in units of the class separation, so generation moves from right to left. Black lines are the peaks of the distribution, red lines its dips. (a) With equal numbers of each class, the single peak splits symmetrically into two: a pitchfork. (b) With 75\% of one class, the single peak simply becomes the majority class, and the minority class appears later, suddenly, as a new peak with a dip next to it: a fold. (c) The distribution itself in case (b), before, at and after the split.}
\label{fig:fold}
\end{figure}

\section{Two Ways to Describe Commitment}
\label{sec:background}

\para{Diffusion models in one paragraph.}
Take a clean image $x_0$ and add Gaussian noise of standard deviation $\sigma$: $x_\sigma=x_0+\sigma\varepsilon$. A diffusion model is a network $D(x,\sigma)$ trained to guess the clean image from the noisy one, that is, $D(x,\sigma)\approx\E[x_0\mid x_\sigma=x]$. From this guess one can compute the \emph{score}, the direction in which noisy images become more likely: $\nabla\log p_\sigma(x)=(D(x,\sigma)-x)/\sigma^2$. To generate an image, start from pure noise at a large $\sigma$ and repeatedly move along the score while lowering $\sigma$. This can be done deterministically, by following an ordinary differential equation (the probability-flow ODE), or with some fresh noise injected at every step (the reverse-time SDE) \citep{song2021score,karras2022elucidating}. With a perfect score, both produce samples from the training distribution. They differ in a way that will matter in Section~\ref{sec:compare}.

\para{The committor: one image at a time.}
Pause the stochastic sampler at some noise level $\sigma$ and take the half-finished image $x$. Now restart generation from $x$ many times, each time with different random noise, and look at what comes out. If 11 of 12 restarts end as a horse, the image is nearly committed to ``horse''. If the outcomes are spread over several classes, it is not yet committed. The probability $q_c(x,\sigma)$ that a restart ends in class $c$ is the \emph{committor}, a term from the study of rare transitions in chemistry and physics \citep{e2010transition}. It describes a single image, needs no choice of measuring stick, and can be estimated simply by counting outcomes. Its drawback is cost, since every estimate needs many restarts. One fact makes it especially useful: for a perfect model, the committor of a noisy image is exactly the posterior probability of each class given that noisy image, $P(c\mid x_\sigma=x)$, under the training data (Appendix~\ref{app:joint}). A classifier trained on noisy data estimates that posterior, so we can check whether a model's committor matches what the data say it should be.

\para{The density transition: the whole population at once.}
The cheaper description looks at many images together. For a pair of classes, say cat ($a$) and dog ($b$), pick a single number $m$ that says how cat-like versus dog-like an image is, scaled so that clean cats sit near $+1$ and clean dogs near $-1$. We call $m$ an \emph{order parameter}. At high noise, every image looks equally ambiguous, and the distribution of $m$ over many noisy images has a single hump in the middle. At low noise, the cats and dogs separate, and the distribution has two humps. The noise level at which the second hump first appears is the \emph{density transition} $\sbif$.

\para{Pitchforks and folds.}
How does one hump become two? Figure~\ref{fig:fold} shows the two possibilities. If the two classes are perfect mirror images (same size, same spread), the hump splits evenly into two, a \emph{pitchfork}. If they are not, which is always the case for real data, the hump drifts towards one class and simply becomes it, and the other class appears later, suddenly, as a new hump. This is a \emph{fold}. Folds are the generic case \citep{stancevic2026information,achilli2026theory,marta2026speciation}. Both shapes are captured by one simple formula. Write $f(m)$ for the slope of the log-distribution of $m$; it is positive to the left of a hump and negative to its right. Near the split,
\begin{equation}
f(m)\approx h+a\,m-b\,m^3,\qquad b>0,
\label{eq:cusp}
\end{equation}
which is the textbook ``cusp'' normal form \citep{golubitsky1985singularities,strogatz2018nonlinear}. Each coefficient has a plain meaning. $a$ measures how strongly the noise is pushing the classes apart, and it grows as noise is removed. $b$ keeps the two humps at a finite distance. $h$ measures how lopsided the split is. With $h=0$ the split is a pitchfork. With $h\ne0$ it is a fold, and the second hump appears exactly when
\begin{equation}
4a^3=27\,b\,h^2.
\label{eq:fold}
\end{equation}
If $h>0$ the class at $+1$ is the one present from the start (Appendix~\ref{app:cusp}).

\para{Why class proportions matter, and why that alone proves little.}
Suppose one class is generated more often than the other. In Figure~\ref{fig:fold}b, that is exactly what makes the split lopsided: the majority class wins the single hump. The effect can be made precise. If $m$ is a log-likelihood ratio, i.e.\ a well-calibrated classifier's log-odds, then changing the class proportions from $w_a{:}w_b$ shifts $h$ by half the change in $\log(w_a/w_b)$ (Appendix~\ref{app:tanh}). So $|h|=0.08$ is the lopsidedness that a 54/46 class split would produce. Equation~\eqref{eq:fold} then adds a second consequence. A more lopsided split (larger $|h|$) needs a stronger push (larger $a$) before the second hump can appear, so it happens later in generation, at a lower noise level:
\begin{equation}
\Delta\log\sbif\;\approx\;-\frac{3\,(b/4)^{1/3}}{s}\;\Delta|h|^{2/3},\qquad s=-\frac{\mathrm da}{\mathrm d\log\sigma}.
\label{eq:shift}
\end{equation}
This is a useful sanity check, but it carries a warning. Any model that generates the wrong class proportions will show linked errors in lopsidedness and in timing, \emph{whatever else is wrong with it}. Seeing those linked errors therefore cannot prove that proportions are the \emph{only} problem. The decisive test, in Section~\ref{sec:edm}, is to give the data the model's proportions and check whether everything else then matches.

\section{How We Measure}
\label{sec:instruments}

\subsection{Committors, by restarting generation}
\label{sec:branching}

We generate 320 images with the unconditional EDM model, using the stochastic sampler. We pause each one at 12 noise levels between $7.9$ and $0.1$, spaced by a factor of about $1.5$. From every paused image we restart generation 12 times, 46{,}080 restarts in all, and identify the class of each finished image with a separate, highly accurate classifier: a Vision Transformer fine-tuned on CIFAR-10 \citep{dosovitskiy2021image}, with 98.4\% test accuracy. We say a trajectory has \emph{decided} its superclass (vehicle or animal) or its class at a noise level if at least 11 of the 12 restarts from that level, and from every lower level, agree with its own final outcome.

For a pair of classes we pool information across trajectories. Among all trajectories that end as $a$ or $b$, we look at the restarts that land in $a$ or $b$ and ask what fraction land on the \emph{other} class of the pair. We call this the cross-over rate. A pair is decided below the noise level from which the cross-over rate stays under a threshold. We report three thresholds ($0.05$, $0.1$ and $0.2$), because the exact noise level depends on how strict one is. We tested this procedure on a toy problem with six Gaussian classes, where committors are known exactly (Appendix~\ref{app:testbed}). The pooled estimate matches its exact target with no bias from using only 12 restarts. Different reasonable definitions of ``decided'' differ by about one step of our noise grid.

\subsection{Density transitions, with a classifier as the measuring stick}
\label{sec:density}

To measure how cat-like versus dog-like a noisy image is, we need an order parameter. For simple data like handwritten digits, a straight line through pixel space works. For CIFAR-10 it does not: along the best straight lines, the classes overlap so much that the distribution never shows two humps (Appendix~\ref{app:linear}). So we use a classifier. We train a ResNet that takes a noisy image and its noise level and outputs class probabilities (85\% accurate on nearly clean test images, 64\% at $\sigma=1$). For a pair of classes, the order parameter is the classifier's log-odds $z=\ell_a-\ell_b$, rescaled at each noise level so that the average $a$ sits at $+1$ and the average $b$ at $-1$. We estimate the distribution of $z$ across noisy images with a smooth histogram (a kernel density estimate), scan 56 noise levels from high to low until a second hump appears, and fit Equation~\eqref{eq:cusp} there to read off $h$.

Two details keep the numbers honest. First, statistics that pool many pairs need special care: each class appears in nine of the 45 pairs, so the pairs are not independent. Our confidence intervals therefore resample whole classes (\emph{class-block intervals}, 1{,}000 resamples), and per-pair estimates are medians of 100 bootstrap resamples. Second, an estimated $h$ is never exactly zero, so we need to know how large it can be by chance. We built pairs of image sets that are \emph{exactly} symmetric by construction: a class and its vertical flip, or a class and a copy with red and blue channels swapped. We passed them through a classifier trained on them. They give $|h|$ up to $0.047$, which serves as our noise floor (Appendix~\ref{app:coord}).

The classifier is a measuring stick, and the scale of the stick matters. If we replaced the log-odds $z$ by a squashed version such as $\tanh(z/2)$, every density transition would move, by a median factor of $4.5$. What does \emph{not} depend on the stick is the ordering of class pairs, which is stable across two very different classifier architectures ($r=0.96$) and a recalibration ($r=0.97$). Nor does any comparison between a model and its data \emph{made with the same stick}, because two identical distributions look identical under any stick. We rely only on these robust quantities.

\subsection{Comparing a model with its data}
\label{sec:compare}

To compare a model with its training data, we record the classifier's log-odds along the model's generation trajectories and compare them with the log-odds of real images that have had noise added to them. This is only fair if a perfect model would give exactly the same result as the data. For the stochastic sampler it does:

\begin{proposition}
\label{prop:joint}
Let $w_c$ be any classifier's probability for class $c$ on nearly clean images (noise level $\sigma_{\min}$), and let $\sigma>\sigma_{\min}$. With a perfect score, the distribution of $\phi(x_\sigma)$ along reverse-SDE trajectories, with each trajectory weighted by $w_a+w_b$ at its end point, equals the distribution of $\phi(x_\sigma)$ obtained by adding noise to real images, with each image weighted by $w_a+w_b$ at $\sigma_{\min}$.
\end{proposition}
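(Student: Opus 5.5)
The plan is to show that, with a perfect score, the reverse-SDE produces the same \emph{joint} law of the pair $(x_\sigma, x_{\sigma_{\min}})$ as the forward noising process applied to real images. Once that is in hand, reweighting by any function of the end point gives identical weighted marginals of $\phi(x_\sigma)$. The key fact is the standard time-reversal theorem for diffusions \citep{song2021score}. If the forward process is $x_\sigma = x_0 + \sigma\varepsilon$, realised as a Markov diffusion in $\sigma$ (e.g.\ $\mathrm dx = \sqrt{\mathrm d\sigma^2}\,\mathrm dW$), then the reverse SDE driven by the exact score $\nabla\log p_\sigma$ and started from $p_{\sigma_{\max}}$ has, as a path measure, exactly the law of the forward process run backwards. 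In particular, every finite-dimensional marginal agrees. The pair we need is $(x_\sigma, x_{\sigma_{\min}})$ with $\sigma>\sigma_{\min}$.

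The steps are as follows.

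\textbf{Step 1.} Fix the forward process on $[\sigma_{\min},\sigma_{\max}]$. Its starting point is $x_{\sigma_{\min}} = x_0 + \sigma_{\min}\varepsilon_0$ with $x_0$ a real image, and increments are independent Gaussians. Because it is Markov, the forward joint law is $p_{\sigma_{\min}}(y)\,k(x\mid y)$, where $k$ is the Gaussian kernel with variance $\sigma^2-\sigma_{\min}^2$. This is exactly the ``add noise to real images'' construction: $x_\sigma$ equals $x_{\sigma_{\min}}$ plus independent extra noise. I will check that the paper's data-side procedure samples this pair. Either it noises $x_0$ to $\sigma_{\min}$ and then to $\sigma$, or it evaluates $w$ on $x_{\sigma_{\min}}$ obtained along the same noising chain. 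If instead the weight is evaluated on a separately noised copy, the statement still holds in distribution, provided the coupling is interpreted this way. I will state this interpretation explicitly.

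\textbf{Step 2.} Invoke time reversal. The exact-score reverse SDE, started at $p_{\sigma_{\max}}$, generates pairs $(x_\sigma, x_{\sigma_{\min}})$ with the same joint law. The initial-condition mismatch at $\sigma_{\max}$ (Gaussian versus $p_{\sigma_{\max}}$) should be treated as part of ``perfect score'' or taken in the limit $\sigma_{\max}\to\infty$.

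\textbf{Step 3.} Conclude the weighted identity. For any bounded test function $g$, set $w(y)=w_a(y)+w_b(y)$. Joint equality gives $\E_{\rm SDE}[g(\phi(x_\sigma))\,w(x_{\sigma_{\min}})] = \E_{\rm data}[g(\phi(x_\sigma))\,w(x_{\sigma_{\min}})]$, and taking $g\equiv1$ shows the normalisers agree too. No property of the classifier is used beyond measurability and boundedness, which is why ``any classifier'' is allowed.

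I expect the main obstacle to be Step 2, specifically the claim that the reverse SDE matches the \emph{joint} law and not just each one-time marginal. This is where the SDE differs from the probability-flow ODE. The ODE also has the correct marginals, but it couples $x_\sigma$ to $x_{\sigma_{\min}}$ deterministically, so its joint law is different and the weighted marginal generally differs. To make this rigorous I would cite Anderson's reverse-time theorem (or the Haussmann--Pardoux conditions: the Gaussian-convolved density $p_\sigma$ is smooth and positive for $\sigma\ge\sigma_{\min}>0$, so the score is locally Lipschitz with linear growth). As a cross-check, I would also verify directly that the reverse transition density equals $p_{\sigma_{\min}}(y)k(x\mid y)/p_\sigma(x)$ by Bayes' rule. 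That identity is precisely the statement that a perfect model's committor is the data posterior, which the paper already uses.
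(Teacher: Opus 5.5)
Your proposal is correct and follows the paper's proof essentially step for step. Exact-score time reversal gives equality of path laws, hence of the joint law of $(x_\sigma,x_{\sigma_{\min}})$; both $\E[g(\phi(x_\sigma))\,w(x_{\sigma_{\min}})]$ and the normaliser $\E[w(x_{\sigma_{\min}})]$ are functionals of that joint law, and the paper makes the same remarks about the Gaussian start at $\sigma_{\max}$ and the ODE. One correction to your Step~1 aside: the nested coupling $x_\sigma=x_{\sigma_{\min}}+\sqrt{\sigma^2-\sigma_{\min}^2}\,\varepsilon_2$, which the paper's data reference uses, is genuinely required, because weighting by an independently noised copy at $\sigma_{\min}$ changes the joint law (the weight and $x_\sigma$ no longer share the noise $\varepsilon_1$) and the identity then fails in general, so drop that hedge rather than trying to ``interpret'' it.
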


The intuition is that the stochastic sampler, run with a perfect score, is the noising process played backwards in time \citep{anderson1982reverse,haussmann1986time}. Any statement about how an image at one noise level relates to the same image at another noise level is therefore identical on both sides (Appendix~\ref{app:joint}). The deterministic ODE sampler does not have this property: it produces the right distribution at each noise level, but it links noise levels together differently. We confirmed both facts on a toy problem with three Gaussian classes and an exactly known score. The SDE comparison returns a ratio of $1.00$ ($[0.94,1.07]$), meaning no difference, while the ODE returns $1.36$ ($[1.20,1.54]$). Model and data are always compared at the same sample size, because the smoothing in our histograms depends on it.

Two kinds of reweighting appear below. \emph{Rebalancing} rescales the model's samples so that the two classes of a pair carry equal weight. \emph{Mass-matching} goes the other way: it reweights the \emph{data} so that its class proportions equal the model's. The second is the cleaner test, because it leaves the model untouched.

\subsection{A check of the measuring pipeline}
\label{sec:check}

If the pipeline works, changing class proportions by a known amount should change $h$ by half the change in log class ratio, regardless of any model. We checked this on the data alone, reweighting the test images by random known class factors. The slope was $1.00$ ($r=0.98$, over the 38 pairs whose split was detected both before and after reweighting). Imposing a known imbalance on the model through guidance gave a slope of $1.05$. These numbers confirm that the classifier behaves like a calibrated likelihood ratio and that the cubic fit is accurate. They say nothing about whether any model is right.

\subsection{Can the comparison see errors?}
\label{sec:power}

\begin{figure}[t]
\centering
\includegraphics[width=0.9\textwidth]{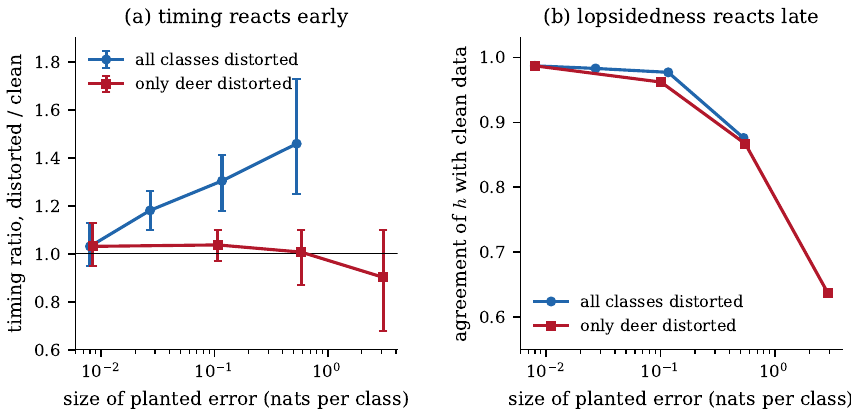}
\caption{Can the comparison detect errors? We distort the real data on purpose, by reweighting images within each class towards more typical-looking examples. Class proportions stay unchanged. We then compare distorted with clean data exactly as we compare a model with data. The horizontal axis is the size of the distortion (KL divergence, nats per class). (a) Ratio of split noise levels, averaged over pairs, with class-block intervals: the timing of the split reacts to small distortions. (b) Agreement of the lopsidedness $h$: it barely reacts until the distortion is large. Blue: all classes distorted; red: only deer.}
\label{fig:plant}
\end{figure}

A test that always says ``looks fine'' is worthless. To find out what ours can and cannot see, we planted errors on purpose. Within each class, we reweighted real images towards the more typical-looking ones, which makes classes tighter, without changing how many images each class has. Then we compared the distorted data with clean data, exactly as we compare a model with data (Figure~\ref{fig:plant}). The two quantities we measure turn out to be very different in sensitivity.
\begin{itemize}[leftmargin=*]
\item \emph{The timing of the split is sensitive.} A distortion of only $0.027$ nats per class moves split noise levels by a factor of $1.18$ ($[1.10,1.26]$).
\item \emph{The lopsidedness is not.} Its agreement with clean data stays at $0.98$ for distortions up to $0.12$ nats, and drops to $0.88$ only at $0.53$ nats.
\item \emph{A distortion in one class shows up mainly in that class's pairs.} Distorting deer alone barely moves the average timing, but it increases the spread across pairs and changes the lopsidedness of several deer pairs.
\end{itemize}
One limitation is built in. The classifier only looks along the ``which class is it?'' direction, so errors in other aspects of images, such as texture or style within a class, are invisible to this test.

\section{Experimental Setup}
\label{sec:setup}

This section collects every detail needed to reproduce our experiments; the code is available at \url{https://github.com/nssprogrammer/diffusion-commit}. Further details, and the checks behind several choices, are in the appendices.

\para{Data.}
All experiments use CIFAR-10 \citep{krizhevsky2009learning}: 50{,}000 training and 10{,}000 test images of $32\times32$ pixels in ten classes, with pixel values scaled to $[-1,1]$. Noise levels $\sigma$ are on this scale. The \emph{data reference} for every comparison is the test set, with four independent noise draws per image (40{,}000 noisy images per noise level), so the classifier is never evaluated on images it was trained on.

\para{Classifiers.}
We use four classifiers, each for a different job.
\begin{itemize}[leftmargin=*]
\item \emph{The census classifier} defines the order parameter. It is a ResNet with an input convolution of 64 channels and five residual blocks (64, 128, 128, 256 and 256 channels, two with stride 2), with BatchNorm and SiLU activations. It takes the noise level through a sinusoidal embedding of $\frac14\log\sigma$, followed by a small MLP and a scale-and-shift (FiLM) in every block \citep{perez2018film}. It is trained on the training set for 25 epochs with AdamW (learning rate $2\times10^{-3}$ on a one-cycle schedule, weight decay $5\times10^{-4}$, batch size 256), with horizontal flips and noise levels drawn log-uniformly between $0.05$ and $10$. Its test accuracy is 85.2\%, 81.8\%, 76.0\%, 64.2\%, 48.6\% and 34.4\% at $\sigma=0.05$, $0.25$, $0.5$, $1$, $2$ and $4$.
\item \emph{A second architecture} checks that results do not depend on the census classifier. It is a plain VGG-style network with GroupNorm, which takes the noise level by rescaling its input by $1/\sqrt{1+\sigma^2}$ and appending $\frac14\log\sigma$ as an extra input channel.
\item \emph{The null classifier} has the census architecture with 30 outputs: the ten classes, their vertical flips, and their red--blue channel swaps. It is used only to build the noise floor of Section~\ref{sec:density}.
\item \emph{The labelling classifier} labels finished images: a ViT-Base fine-tuned on CIFAR-10 at $224\times224$ resolution \citep{dosovitskiy2021image}, with 98.4\% test accuracy. It is independent of everything else in the pipeline.
\end{itemize}

\para{Diffusion models.}
NVIDIA's pretrained EDM checkpoints for CIFAR-10 \citep[unconditional and class-conditional, VP preconditioning, 56M parameters;][]{karras2022elucidating}, and the original DDPM checkpoint \citep{ho2020denoising}. DDPM is wrapped as described in Section~\ref{sec:ddpm}.

\para{Samplers.}
Unless stated otherwise, we sample with the reverse-time SDE using 256 Euler--Maruyama steps on the EDM noise schedule (exponent $\rho=7$), from $\sigma=80$ down to $\sigma_{\min}=0.05$, in half precision, and finish with one denoising step. For comparison we also use EDM's standard deterministic Heun sampler (18 steps, 35 network evaluations, down to $\sigma=0.002$) and the stochastic sampler with 512 and 1{,}024 steps, all in full precision.

\para{Recording.}
Along every generated trajectory we record the census classifier's logits at the 68 schedule steps between $\sigma=7.95$ and $\sigma=0.086$. At the end, the classifier's class probabilities at $\sigma_{\min}$ give each trajectory a weight for each class. Data-reference images are treated identically: noised first to $\sigma_{\min}$ (for their class weights) and then onwards (for their logits).

\para{Density transitions.}
For each class pair, the log-odds are standardised at each noise level using the data reference, so that the two class means sit at $\pm1$, and the same standardisation is applied to the model. The distribution is estimated with a Gaussian kernel density (bandwidth 1.5 times Silverman's rule) on a grid of 4{,}001 points over $[-4,4]$. Peaks closer than $0.03$, or separated by less than $0.05$ nats of log-density, are merged. The census scans 56 noise levels, spaced logarithmically from 20 down to $0.08$. Model comparisons scan the 68 recorded levels. The transition $\sbif$ is the first level with two peaks after at least one level with a single peak. The asymmetry $h$ comes from a density-weighted fit of Equation~\eqref{eq:cusp} over $|m|<1$ at that level.

\para{Committors.}
We use 320 trajectories of the unconditional EDM model, paused at 12 noise levels between $7.95$ and $0.099$, with 12 restarts from each paused image (46{,}080 restarts). Finished images are labelled by the ViT. Decisions use the 11-of-12 rule for superclass and class, and pooled cross-over rates with thresholds $0.05$, $0.1$ and $0.2$ for pairs. A level counts only if at least 20 restarts landed in the pair.

\begin{table}[t]
\centering
\caption{All sampling runs. ``Images'' is the number of generated trajectories. Unless noted, the sampler is the 256-step stochastic sampler in half precision.}
\label{tab:runs}
\footnotesize\setlength{\tabcolsep}{4pt}
\begin{tabular}{llrl}
\toprule
model & sampler and modification & images & purpose \\
\midrule
EDM uncond. & -- & 14{,}500 & main comparison (Sec.~\ref{sec:edm}) \\
EDM uncond. & -- (replicate) & 3{,}500 & replication \\
EDM uncond. & -- & 320 (+46{,}080 restarts) & committors (Sec.~\ref{sec:cifar}) \\
EDM cond. & equal labels & 4{,}000 & proportions right by design \\
EDM uncond. & Heun, 18 steps, full precision & 10{,}000 & standard sampler \\
EDM uncond. & 256 / 512 / 1{,}024 steps, full prec. & 2{,}000 / 1{,}500 / 1{,}000 & sampler (Sec.~\ref{sec:sampler}) \\
EDM uncond. & exact correction, 2 rounds & 2{,}500 each & correction (Sec.~\ref{sec:correct}) \\
EDM uncond. & first-order corr.; imposed imbalance & 3{,}500 each & pipeline check \\
EDM uncond. & score $\times\lambda$, $\lambda\in\{0.995,0.98,0.95\}$ & 2{,}500 each & corruption (App.~\ref{app:pretrained}) \\
EDM uncond. & noise level $\times\kappa$, $\kappa\in\{1.1,1.25\}$ & 2{,}500 each & corruption \\
DDPM & wrapped as EDM denoiser & 3{,}000 & second family (Sec.~\ref{sec:ddpm}) \\
\bottomrule
\end{tabular}
\end{table}

\para{Interventions.}
The class-proportion correction of Equation~\eqref{eq:tilt} uses the census classifier at every sampling step. Its target is the classifier's class proportions on the test set, and the model's proportions come from the main run. The second round raises the target for each class by the ratio by which the first round fell short. The graded corruptions multiply the learned score by $\lambda$, or pass the denoiser the noise level $\kappa\sigma$ instead of $\sigma$, at every step (Table~\ref{tab:runs}).

\para{Planted errors.}
Within each class we reweight test images by $e^{\beta u}$, where $u$ is the census classifier's standardised margin (true-class logit minus the largest other logit) at the lowest recorded noise level. Weights are normalised so that each class keeps its total weight. We use $\beta\in\{0.25,0.5,1\}$ for all classes and $\beta\in\{0.5,1,2\}$ for deer alone. The distorted half of the test set is compared with the other, undistorted half.

\para{Statistics.}
Per-pair values are medians of bootstrap resamples within each class: 100 for the main comparisons, 60 for planted errors and 30 for corruptions. Intervals for anything averaged or correlated over pairs come from 1{,}000 class-block resamples, which resample the ten classes with replacement and weight each pair by the product of its classes' multiplicities. Census intervals per pair use 1{,}000 resamples. Multiple testing over pairs is controlled with the Benjamini--Hochberg procedure at 5\%. Reliability ceilings use split halves and the Spearman--Brown formula. Model and data are always compared at the same sample size.

\para{Image quality.}
FID \citep{heusel2017gans} uses Inception pool features of the generated images against the 50{,}000 training images. Because FID is biased upwards at small sample sizes, we compare FID only between runs of the same size.

\para{Hardware and software.}
All experiments ran on a single NVIDIA T4 GPU (16 GB) in cloud notebooks, about 80 GPU-hours in total, using PyTorch. The statistical re-analyses ran on CPU. Appendix~\ref{app:compute} breaks down compute and lists dataset and checkpoint licences.

\section{How CIFAR-10 Commits}
\label{sec:cifar}

\begin{figure}[t]
\centering
\includegraphics[width=\textwidth]{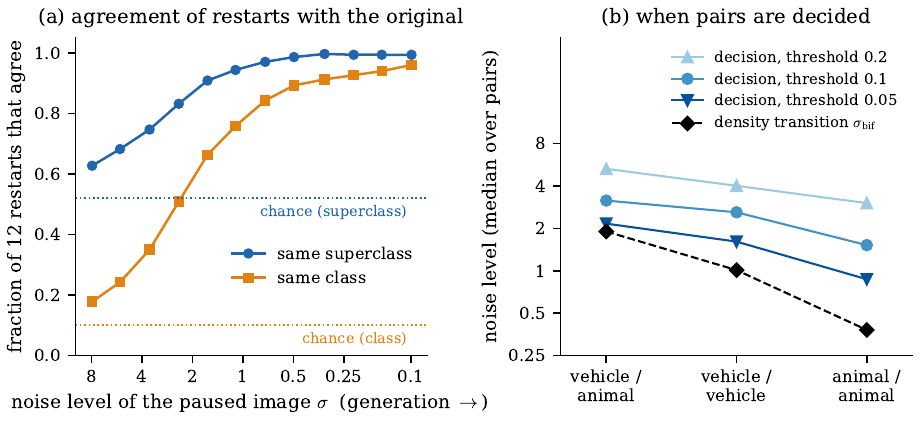}
\caption{Decisions of individual images in the unconditional EDM model. (a) For images paused at each noise level (high noise on the left), the fraction of 12 restarts that end in the same superclass (blue) or the same class (orange) as the original. Superclass agreement rises first. (b) Median noise level at which pairs of each kind are decided, for three thresholds on the cross-over rate (blue), and the median density transition of the same pairs (black). Decisions and density transitions agree on the order, coarse before fine, but not on the absolute noise level.}
\label{fig:committor}
\end{figure}

\begin{table}[t]
\centering
\caption{When pairs are decided, and how well the cheap density transition tracks it. Left: median decision noise level by kind of pair, for three thresholds on the cross-over rate, and the ratio between vehicle/animal and animal/animal pairs. Right: rank correlation between decision levels and density transitions (class-block 95\% interval), and their median ratio, under two choices of measuring stick.}
\label{tab:decisions}
\footnotesize\setlength{\tabcolsep}{3.5pt}
\begin{tabular}{c|cccc|cc|cc}
\toprule
 & \multicolumn{4}{c|}{decision noise level (median)} & \multicolumn{2}{c|}{stick: log-odds $z$} & \multicolumn{2}{c}{stick: $\tanh(z/2)$} \\
threshold & veh./anim. & veh./veh. & anim./anim. & ratio & rank corr. & ratio & rank corr. & ratio \\
\midrule
0.05 & 2.16 & 1.61 & 0.87 & 2.5 & 0.74 [0.47, 0.91] & 1.6 & 0.73 [0.38, 0.93] & 0.39 \\
0.10 & 3.15 & 2.60 & 1.52 & 2.1 & 0.84 [0.66, 0.93] & 2.8 & 0.88 [0.71, 0.97] & 0.58 \\
0.20 & 5.29 & 4.01 & 3.02 & 1.8 & 0.81 [0.54, 0.94] & 4.2 & 0.87 [0.61, 0.96] & 0.93 \\
\bottomrule
\end{tabular}
\end{table}

\para{Coarse decisions come first.}
Individual images settle ``vehicle or animal?'' below a median noise level of $2.4$, and their exact class below $1.1$ (Figure~\ref{fig:committor}a). Pair by pair, vehicle/animal pairs such as airplane/frog are decided at 1.8--2.5 times the noise level of animal/animal pairs such as cat/deer, depending on the threshold (Table~\ref{tab:decisions}). This coarse-to-fine order agrees with earlier observations on trained models \citep{handke2025measuring,sclocchi2025phase} and with theory for data with nested classes \citep{achilli2026theory}. A pair can sometimes be decided before the coarser question. An image may have ruled out ``automobile'' while ``truck'' and ``deer'' are both still open.

\para{The cheap measurement gets the order right but not the timing.}
Density transitions and decision levels rank the 45 pairs in nearly the same order, with rank correlations of 0.73--0.88 for every threshold and both measuring sticks. Their ratio, however, depends on the stick. With the log-odds, density transitions come 1.6--4.2 times later than decisions, and more so for fine distinctions. With the squashed stick $\tanh(z/2)$, which is the pairwise committor scale, the ratio is the same for all kinds of pairs and close to one at the loosest threshold. So the log-odds census in Figure~\ref{fig:census}a, where vehicle/animal splits happen $5.8$ times earlier than animal/animal splits (class-block interval $[3.9,8.1]$), exaggerates how strongly hierarchical commitment is. The \emph{order} is real. Most of it can be predicted from simple image statistics: a Gaussian model built from each class's average image and pixel covariance predicts the transitions with $r=0.89$.

\para{Splits are only mildly lopsided.}
The typical $|h|$ is $0.08$, the lopsidedness of a 54/46 class split. After correcting for testing 44 pairs at once (Benjamini--Hochberg at 5\%), 19 pairs are more lopsided than the noise floor, and 12 remain so, with the same sign, under two different measuring sticks (Figure~\ref{fig:census}b).

\begin{figure}[t]
\centering
\includegraphics[width=\textwidth]{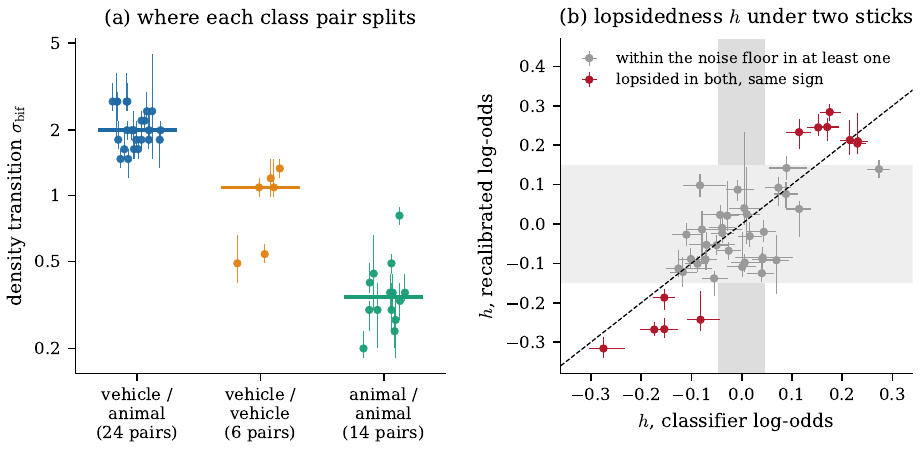}
\caption{The population view of CIFAR-10, measured with the classifier's log-odds. (a) Noise level of the density transition for each class pair, grouped by kind of pair, with 95\% bootstrap intervals; horizontal bars are medians. (b) Lopsidedness $h$ of each pair under the raw log-odds and a recalibrated version. Shaded bands show the noise floor from exactly symmetric pairs. Red points are lopsided beyond the floor under both, with the same sign.}
\label{fig:census}
\end{figure}

\section{What the EDM Model Gets Wrong}
\label{sec:edm}

We now turn to NVIDIA's pretrained unconditional EDM model for CIFAR-10 \citep[56M parameters;][]{karras2022elucidating}. We sample it with the stochastic sampler (256 Euler--Maruyama steps, half precision), and also study its class-conditional sibling. The main run has 14{,}500 generated images, the others 2{,}500--4{,}000. Each is compared with the data at the same sample size.

\begin{figure}[t]
\centering
\includegraphics[width=\textwidth]{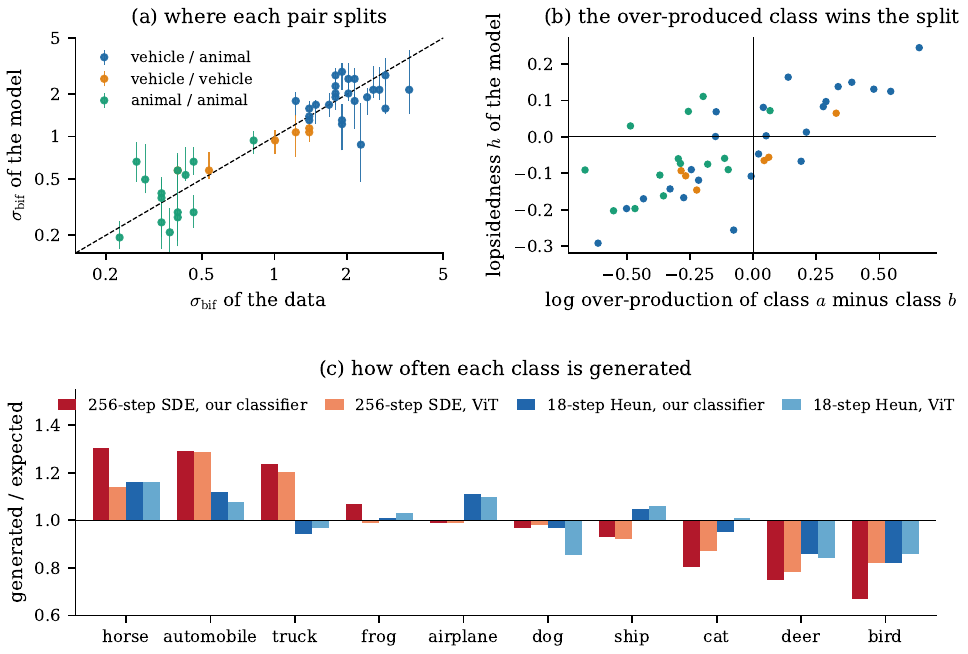}
\caption{The unconditional EDM model, before any correction. (a) Density transition noise levels of the model against the data's, with 95\% bootstrap intervals; the dashed line is equality. (b) The model's lopsidedness $h$ for each pair against the difference in how much the model over-produces the two classes. (c) How often each class is generated, relative to the test set, under our 256-step sampler and EDM's standard 18-step sampler, as judged by our classifier and by an independent ViT.}
\label{fig:pretrained}
\end{figure}

\subsection{At first sight, the model looks only partly right}

The model generates some classes too often (horse, automobile) and others too rarely (bird, deer), as both classifiers agree (Figure~\ref{fig:pretrained}c). Its split noise levels scatter around the data's, with no overall shift but a large spread across pairs: the standard deviation of the log ratio is $0.33$, against $0.21$ between two halves of the data. Its lopsidedness is essentially unrelated to the data's ($r=0.11$). Instead it follows the model's own class imbalance: the class it over-produces is the class that wins the split ($r=0.76$, Figure~\ref{fig:pretrained}b). The timing and lopsidedness errors are also linked as Equation~\eqref{eq:shift} predicts (Figure~\ref{fig:direct}a). Their correlation is $-0.80$ (class-block interval $[-0.92,-0.48]$), and $-0.68$ when the two are measured on separate halves of the generated images, which rules out shared noise. The equation's prediction, which has no free parameters, tracks the measured timing errors with $r=0.71$, though with a slope of $0.65$ that we cannot fully explain. As Section~\ref{sec:background} warned, all of this is what wrong class proportions would produce, whatever else is wrong. So it does not settle the question.

\subsection{The decisive test: give the data the model's class proportions}

\begin{figure}[t]
\centering
\includegraphics[width=\textwidth]{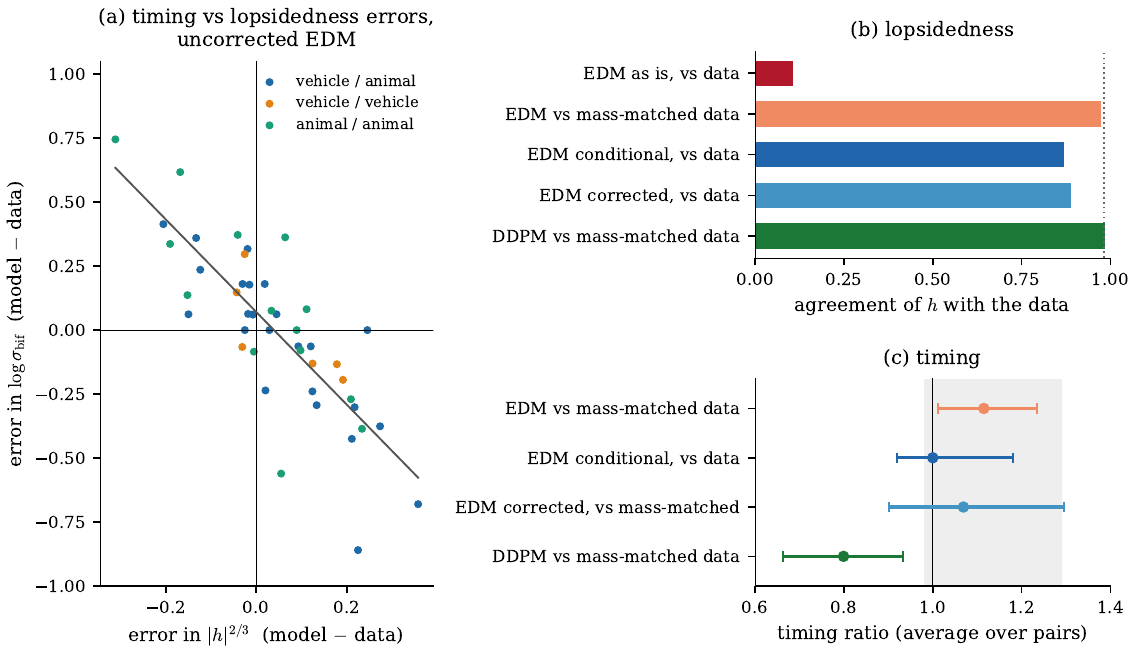}
\caption{Is the class imbalance the only error? (a) For each class pair of the uncorrected EDM model, the error in split timing against the error in lopsidedness, coloured by kind of pair. (b) How well the lopsidedness agrees with the data, for EDM as is, EDM against data reweighted to its class proportions (mass-matched), the class-conditional EDM with equal labels, EDM after correcting its proportions in the sampler, and DDPM against mass-matched data. The dotted line is the agreement between two halves of the data. (c) Average ratio of split noise levels over pairs, with class-block 95\% intervals. The grey band is the range expected between two halves of the data.}
\label{fig:direct}
\end{figure}

\begin{table}[t]
\centering
\caption{Models compared with their data. ``Mass-matched'' data have been reweighted so that their class proportions equal the model's. ``Two halves of the data'' gives the agreement expected from sampling noise alone, at a matching sample size. Brackets are class-block 95\% intervals.}
\label{tab:models}
\footnotesize\setlength{\tabcolsep}{4pt}
\begin{tabular}{llccc}
\toprule
model & compared with & agreement of $h$ & spread of timing error & timing ratio \\
\midrule
EDM, $n{=}14{,}500$ & the data & 0.11 [$-0.47$, 0.68] & 0.332 [0.16, 0.42] & 1.00 \\
 & mass-matched data & \textbf{0.974} [0.94, 0.99] & \textbf{0.178} [0.09, 0.25] & 1.115 [1.01, 1.23] \\
 & two halves of the data & 0.981 [0.93, 0.99] & 0.208 [0.12, 0.30] & --- \\
\midrule
EDM cond., equal labels & the data & 0.87 & 0.223 & 1.00 [0.92, 1.18] \\
\quad $n{=}4{,}000$ & two halves of the data & --- & 0.211 & 1.07 [0.98, 1.29] \\
\midrule
EDM, corrected sampler & the data & 0.889 [0.64, 0.98] & 0.254 [0.15, 0.35] & 1.017 \\
\quad $n{=}2{,}500$ & mass-matched data & 0.946 [0.77, 0.99] & 0.239 [0.14, 0.30] & 1.069 [0.90, 1.30] \\
\midrule
DDPM, $n{=}3{,}000$ & the data & 0.52 [$-0.38$, 0.95] & 0.405 [0.22, 0.53] & 0.60 \\
 & mass-matched data & \textbf{0.985} [0.89, 1.00] & 0.307 [0.13, 0.45] & \textbf{0.799} [0.66, 0.93] \\
 & two halves of the data & 0.980 & 0.206 & --- \\
\bottomrule
\end{tabular}
\end{table}

Here is the test. Take the real data and reweight it so that each class appears as often as it does in the model's samples. Nothing about the model changes. If class proportions are the model's only commitment error, these mass-matched data should behave exactly like the model, pair by pair. They do (Table~\ref{tab:models}). The lopsidedness agrees at $0.974$, the same as the $0.981$ between two halves of the data. The spread of timing errors is $0.178$, at or below the $0.208$ between two halves of the data. The only thing left over is a small shift: the model splits about 11\% earlier in generation than mass-matched data ($[1.01,1.23]$). Our planted-error test tells us how small that is: the same shift would come from distorting every class by only about $0.03$ nats. After the proportions are corrected inside the sampler (Section~\ref{sec:correct}), the shift is $1.02$ against the raw data and $1.07$ ($[0.90,1.30]$) against mass-matched data. That interval contains both 1 and the uncorrected value, so we cannot tell whether the shift survives the correction.

\subsection{A model that gets the proportions right by design}

The class-conditional EDM model can be told which class to draw. If we ask for equal numbers of each class, the class proportions are right by construction: their error by the ViT is $0.016$, below the $0.04$ expected from sampling noise at this sample size. With no reweighting of any kind, this model splits at the same noise levels as the data (median ratio $1.00$, $[0.92,1.18]$), with a pair-to-pair spread of $0.223$ against $0.211$ between two halves of the data. Its lopsidedness agrees with the data's at $0.87$.

\subsection{Is it the network or the sampler?}
\label{sec:sampler}

\begin{table}[t]
\centering
\caption{Class imbalance (average $|\log|$ of generated over expected frequency, by the ViT) of the same EDM network under different samplers. ``Noise floor'' is the value expected from random sampling alone at that sample size, with perfectly balanced classes.}
\label{tab:sampler}
\small
\begin{tabular}{lcccc}
\toprule
sampler & precision & images & class imbalance & noise floor \\
\midrule
stochastic, 256 steps & half & 2{,}500--3{,}500 & 0.12--0.15 & 0.040--0.048 \\
stochastic, 256 steps & full & 2{,}000 & 0.171 & 0.054 \\
stochastic, 512 steps & full & 1{,}500 & 0.132 & 0.063 \\
stochastic, 1{,}024 steps & full & 1{,}000 & 0.137 & 0.076 \\
deterministic, 18 steps (EDM default) & full & 10{,}000 & 0.095 & 0.024 \\
\bottomrule
\end{tabular}
\end{table}

How much of the imbalance belongs to the trained network and how much to our sampler? Some to each (Table~\ref{tab:sampler}). Numerical precision makes no difference. Taking more steps does: going from 256 to 512 steps reduces the imbalance, and going on to 1{,}024 steps does not reduce it further, which leaves it close to EDM's standard deterministic sampler once sampling noise is accounted for. The same classes are over- and under-produced throughout. So the network itself has a class imbalance of roughly 10--15\%, and our coarse 256-step sampler roughly doubles it. Since our comparisons concern the images actually generated, they hold either way. ``The model'' in this paper means the network together with its sampler. Similar skews have been reported for attributes such as gender and race in unconditional diffusion models trained on balanced face data \citep{perera2023analyzing,pal2023gaussian}. Here we see the same effect for classes, and we see exactly where it shows up during generation.

\subsection{Does the model's committor match the data?}
\label{sec:fidelity}

\begin{figure}[t]
\centering
\includegraphics[width=\textwidth]{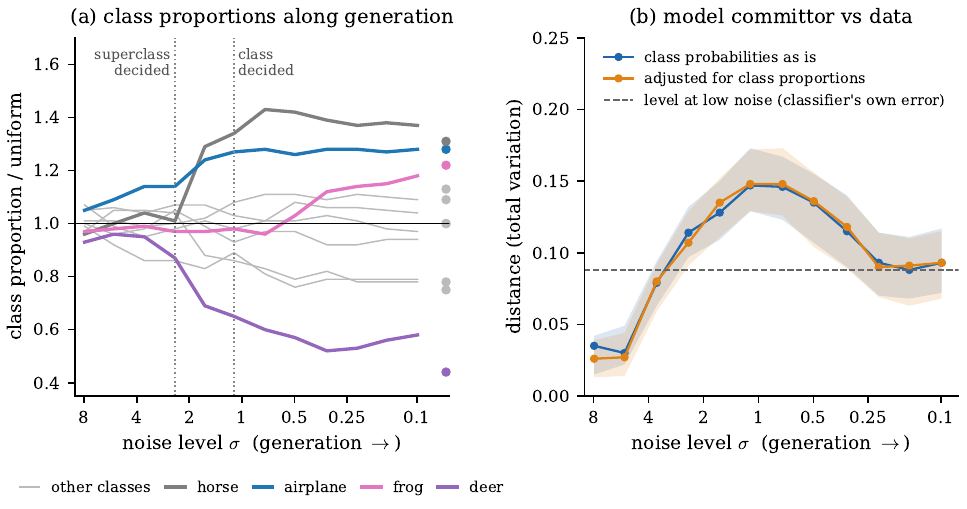}
\caption{The model's decisions, one image at a time. (a) Along the model's own trajectories, the class proportions that our classifier sees at each noise level, relative to uniform. Dots on the right are the final proportions. The imbalance becomes visible as superclasses are decided (dotted lines). (b) How far the model's committors are from the data's class probabilities, beyond what 12 restarts can resolve, with 95\% bootstrap bands: as is (blue), and after adjusting for the model's class proportions (orange), which makes no difference. The dashed line is the level reached at low noise, which reflects our classifier's own mistakes.}
\label{fig:massnoise}
\end{figure}

For a perfect model, the committor of every half-finished image equals the class probability that our noise-aware classifier assigns to it. On average the two agree well: regressing one on the other gives slopes between $0.96$ and $1.11$ at every noise level. Their distance is a little larger than our classifier's own error level, by up to about $0.06$ in total variation, around $\sigma\approx1.1$, which is where decisions are being made (Figure~\ref{fig:massnoise}b). This excess is \emph{not} explained by class proportions: adjusting for them changes it by less than $0.01$. The classifier's own error level also varies with noise level, by about as much, so we cannot say whether this small excess comes from the model or from the classifier. We leave it open. Figure~\ref{fig:massnoise}a shows something else: the model's class imbalance becomes visible to the classifier between $\sigma=2.4$ and $1.6$, as superclasses are being decided. The classifier is weak at high noise, so this tells us where the imbalance becomes \emph{visible}, not necessarily where it starts.

\section{Correcting the Class Proportions}
\label{sec:correct}

If the problem is class proportions, can we fix it during sampling? We can nudge the sampler towards target proportions $f^t$ by adding a small correction to the score at every step. The exact correction depends on the model's own class probabilities. Assuming, as the previous section supports, that the model's classes have the right shapes and only the wrong proportions, those probabilities can be computed from a classifier trained on the data:
\begin{equation}
\nabla_x\Big[\operatorname{LSE}_c\big(\ell_c+\log\tfrac{f^t_c}{f^d_c}\big)-\operatorname{LSE}_c\big(\ell_c+\log\tfrac{f^m_c}{f^d_c}\big)\Big].
\label{eq:tilt}
\end{equation}
Here $\ell_c$ are the classifier's logits, $f^d$ are the class proportions it was trained on, $f^m$ those the model generates, and LSE is log-sum-exp. This is classifier guidance with a strength fixed by theory, so there is nothing to tune. An earlier version of this correction used the data's class probabilities directly, which is right only to first order.

\begin{table}[t]
\centering
\caption{Correcting the class proportions of the unconditional EDM model inside the sampler (2{,}500 images each, 256-step stochastic sampler). Class imbalance by our classifier and by the ViT; the sampling-noise floor at this size is $0.048$. FID at 2{,}500 images.}
\label{tab:correct}
\small
\begin{tabular}{lcccc}
\toprule
run & imbalance (ours) & imbalance (ViT) & FID & $h$ vs data \\
\midrule
no correction & 0.134 & 0.119 & 12.93 & $\approx0$ \\
first-order correction, 1 round & 0.076 & 0.083 & 12.68 & 0.56 \\
exact correction (Eq.~\ref{eq:tilt}), 1 round & 0.093 & 0.107 & 12.95 & 0.54 \\
exact correction, 2nd round & \textbf{0.051} & \textbf{0.074} & 12.98 & \textbf{0.89} \\
\bottomrule
\end{tabular}
\end{table}

One round of either version removes about half of the imbalance, and the exact version is no better than the first-order one (Table~\ref{tab:correct}). The toy problem shows the same thing at imbalances of this size (Appendix~\ref{app:testbed}). The remaining half comes from the classifier's own inaccuracy, and a second round, which aims to overshoot by what the first round missed, takes care of it. Our classifier's measured imbalance then falls to the level expected from sampling noise alone, and the ViT's to $0.074$, with no change in FID. And the model's commitment, measured with no reweighting at all, now matches the data's: the lopsidedness agrees at $0.89$ (up from about zero) and the split noise levels at a ratio of $1.02$ (Table~\ref{tab:models}). This is the most direct evidence in the paper that the class proportions were the problem: fixing them in the sampler fixes commitment.

\para{Must the correction run during the whole of generation?}
In the toy problem, applying the correction only during the noisy first part, or only during the cleaner second part, of generation each achieves part of the effect, and the two parts add up to roughly the whole (Table~\ref{tab:bands}). An earlier, smaller CIFAR-10 experiment seemed to show that neither half worked alone. That experiment used one split point and one random seed, and the effect it measured was smaller than its own noise, so we regard it as inconclusive and draw no conclusion about where on CIFAR-10 the correction must act.

\begin{table}[t]
\centering
\caption{Applying the exact correction to only part of generation, on the toy six-class problem (20{,}000 images, mean $\pm$ SD over 3 seeds): fraction of the full correction achieved when correcting only at noise levels above, or only below, a split point $\sigma_1$.}
\label{tab:bands}
\small
\begin{tabular}{lcccc}
\toprule
split point $\sigma_1$ & 0.5 & 1.0 & 2.0 & 4.0 \\
\midrule
correct only for $\sigma\ge\sigma_1$ (noisy part) & $0.96\pm0.03$ & $0.79\pm0.02$ & $0.32\pm0.02$ & $0.08\pm0.02$ \\
correct only for $\sigma<\sigma_1$ (cleaner part) & $0.08\pm0.02$ & $0.43\pm0.02$ & $0.88\pm0.03$ & $0.99\pm0.03$ \\
sum & 1.05 & 1.22 & 1.20 & 1.07 \\
\bottomrule
\end{tabular}
\end{table}

\section{A Second Model: DDPM}
\label{sec:ddpm}

Is this story special to EDM? We ran the original DDPM model for CIFAR-10 \citep{ho2020denoising} through exactly the same pipeline. DDPM was trained with a different noise parameterisation, so we translate: at each noise level we use DDPM's nearest training noise level and convert its noise prediction into a clean-image guess, $D(x,\sigma)=x-\sigma_t\,\varepsilon_\theta(x\sqrt{\bar\alpha_t},t)$ with $\sigma_t=\sqrt{(1-\bar\alpha_t)/\bar\alpha_t}$. The noise levels differ by a median of $0.2\%$, at most $1.3\%$ above $\sigma=0.1$ and $2.6\%$ at the very lowest levels. That is far below the $10\%$ mismatch that visibly changes commitment in our corruption experiments (Appendix~\ref{app:pretrained}). DDPM is much more imbalanced than EDM, with an imbalance of $0.29$ by the ViT: frogs make up 19\% of its images, dogs and ships about 6\% each.

The lopsidedness of DDPM's splits is, again, entirely explained by its class proportions: against mass-matched data the agreement is $0.985$ ($[0.89,1.00]$), as good as between two halves of the data (Table~\ref{tab:models}). The timing is not. Even against mass-matched data, DDPM splits at $0.80$ times the data's noise level ($[0.66,0.93]$), that is, later in generation, and with more pair-to-pair spread ($0.31$ against $0.21$). Our planted-error test showed that timing reacts to distortions of a few hundredths of a nat per class. This looks like that kind of error, in the direction of classes that are broader and overlap more than they should. So the method does not simply pass every model. It tells apart a model whose only commitment error is class proportions from one that also commits late. \citet{handke2025measuring} measured the timing of class decisions in DDPM models on CIFAR-10 with an entropy-based method; our comparison adds a reference point from the data.

\section{Related Work}

\para{Commitment in diffusion models.}
Speciation and critical windows have been analysed for mixtures and for data with nested classes \citep{biroli2023generative,biroli2024dynamical,raya2023spontaneous,li2024critical,sclocchi2025phase,achilli2026theory,marta2026speciation}. \citet{stancevic2026information} study the lopsided two-class case and entropy-based detectors. \citet{handke2025measuring,handke2026entropic} track class probabilities along the trajectories of trained models. \citet{sclocchi2025phase} use forward--backward experiments, which are close in spirit to our restarts. Our contributions are measurements: committors placed side by side with density transitions, a model-versus-data comparison that is exact in principle and whose power we measure, and a breakdown of a trained model's commitment errors into class proportions and, for a weaker model, timing.

\para{Measuring sticks and committors.}
That a landscape depends on the coordinate in which it is drawn is well known in molecular simulation \citep{hartmann2007comment}, where the committor is the standard coordinate-free alternative \citep{e2010transition}. Following the peaks of a distribution as it is smoothed goes back to \citet{silverman1981using} and \citet{minnotte1993mode}.

\para{Class balance in generative models.}
Counting the classes of generated images with a separate classifier goes back at least to \citet{santurkar2018classification}. Unconditional diffusion models trained on balanced data have been found to skew attributes such as gender and race \citep{perera2023analyzing,pal2023gaussian}. Rebalancing through guidance is established \citep{qin2023class,parihar2024balancing}. \citet{nicoletti2026interplay} analyse how class spread and imbalance shape what a network learns first. The classes our EDM run over-produces tend to be those with more pixel variance (correlation $0.5$ over ten classes), consistent with their analysis though not a test of it. Exposure bias \citep{ning2024elucidating} concerns mismatches between the noisy images a sampler visits and those seen in training.

\section{Limitations}

\begin{itemize}[leftmargin=*]
\item \emph{Scope.} One dataset with ten classes, and two model families. Whether the same picture holds for larger datasets with deeper class hierarchies, such as ImageNet, is open.
\item \emph{What the method can see.} Only commitment along the ``which class?'' direction, measured through a learned classifier. The lopsidedness measure is insensitive to distortions below about $0.5$ nats per class, whereas the timing measure is much more sensitive. Errors unrelated to class identity are invisible.
\item \emph{Committors.} 320 trajectories, 12 restarts each, and noise levels spaced by a factor of $1.5$. Decision levels move by about one grid step with the definition of ``decided''. The small excess in committor fidelity is unexplained.
\item \emph{Left-overs.} EDM's 11\% timing shift against mass-matched data has no mechanism we can identify, and after correction our intervals cannot tell whether it persists. Equation~\eqref{eq:shift} predicts timing errors about a third larger than measured, also unexplained.
\item \emph{DDPM} is run through our translation layer rather than its native sampler.
\item \emph{Correction.} We could not determine on CIFAR-10 at which noise levels the correction must act. The claim that class ordering is mostly simple image statistics rests on a Gaussian prediction, not on generating from a Gaussian model.
\end{itemize}

\section{Conclusion}

So when does a diffusion model decide what to draw? On CIFAR-10, coarse decisions come first. An image settles whether it is a vehicle or an animal at about twice the noise level at which it settles which animal. And what does a strong model get wrong about these decisions? Remarkably little, along the class axis: it decides exactly as its data do, except that it draws some classes too often. Give the data the model's class proportions and the data reproduce the model. Give the model the data's proportions, through conditioning or a small, tuning-free correction to its sampler, and the model reproduces the data. A weaker model shows the same effect plus a delay in when it commits, which the same method picks up. The cheap way of measuring commitment puts decisions in the right order, but only the expensive way, restarting generation and counting outcomes, tells their time.

\bibliographystyle{abbrvnat}
\bibliography{refs}

\clearpage
\appendix

\section{Theory}
\label{app:theory}

Throughout, $x_\sigma=x_0+\sigma\varepsilon$ with $\varepsilon\sim\mathcal N(0,I_d)$, and $p_\sigma$ is the density of $x_\sigma$. For $\sigma>0$ it is smooth and strictly positive, which justifies the integrations by parts below for smooth test functions $g$ with bounded derivatives.

\subsection{The mean-force identity}
\label{app:identity}

For a linear order parameter $m(x)=u\cdot(x-c)/\ell$ with $\|u\|=1$,
\begin{equation}
\frac{\mathrm d}{\mathrm dm}\log p_\sigma(m)=\E\big[\,\ell\,u\cdot\nabla\log p_\sigma(x_\sigma)\;\big|\;m(x_\sigma)=m\,\big].
\label{eq:mf}
\end{equation}
\emph{Proof.} Since $\nabla_x g(m(x))=g'(m(x))\,u/\ell$, we have $g'(m(x))=\ell\,u\cdot\nabla_x g(m(x))$. Integrating by parts, $\E[g'(m)]=-\E[g(m)\,\ell\,u\cdot\nabla\log p_\sigma(x)]$. In one dimension, $\E[g'(m)]=-\E[g(m)(\log p_\sigma)'(m)]$. Equating the two for every $g$ gives~\eqref{eq:mf}. $\square$

This is the potential-of-mean-force identity along a linear collective variable \citep{kirkwood1935statistical}. For a smooth nonlinear $\phi$ with $\nabla\phi\neq0$, the same argument with $F=\nabla\phi/|\nabla\phi|^2$ gives $\frac{\mathrm d}{\mathrm dm}\log p_\sigma(m)=\E[F\cdot\nabla\log p_\sigma+\nabla\cdot F\mid\phi=m]$ \citep{carter1989constrained}. Along a classifier-defined coordinate the divergence term makes the estimator very noisy. At the largest budget we could afford, two independent halves of the data disagreed with each other as much as with the reference, which is why the main text uses Proposition~\ref{prop:joint} instead.

\subsection{Why modes are not fixed points for nonlinear coordinates}
\label{app:modes}

Under the reverse dynamics, a nonlinear $\phi$ evolves with a drift and a diffusion coefficient. The Markovian projection onto $m$ has an effective diffusion $D(m)\propto\E[|\nabla\phi|^2\mid\phi=m]$ \citep{gyongy1986mimicking}. For a one-dimensional diffusion with drift $\mu(m)$ and diffusion $D(m)$, the stationary density satisfies $\mu=D\,(\log p)'+D'$. Zeros of the drift therefore coincide with modes of $p$ only when $D$ is constant, which holds for linear coordinates. For classifier coordinates, $|\nabla\phi|$ varies strongly with $m$, so we interpret the natural-image results as statements about the modes of a density, not about attractors.

\subsection{Proof of Proposition~\ref{prop:joint}, and the committor}
\label{app:joint}

Write the forward process as $x_s=x_0+W_{s^2}$ with $W$ a standard Brownian motion, so that $x_s$ has density $p_s$ and independent increments. Assume the score is exact and that the reverse SDE is started from $p_{\sigma_{\max}}$. By the time-reversal theorem \citep{anderson1982reverse,haussmann1986time}, the reverse-time SDE $\mathrm dx=-2s\nabla\log p_s(x)\,\mathrm ds+\sqrt{2s}\,\mathrm d\bar W_s$, integrated from $\sigma_{\max}$ down to $0$, has the same law on paths as the forward process read backwards. Hence for $\sigma>\sigma_{\min}$ the pair $(x_\sigma,x_{\sigma_{\min}})$ has the same joint law under both. On the forward side this joint law is realised by $x_{\sigma_{\min}}=x_0+\sigma_{\min}\varepsilon_1$ followed by $x_\sigma=x_{\sigma_{\min}}+\sqrt{\sigma^2-\sigma_{\min}^2}\,\varepsilon_2$. The quantity $\E[g(\phi(x_\sigma))\,w(x_{\sigma_{\min}})]$ is a functional of this joint law, so it takes the same value on both sides for every bounded $g$. Normalising by $\E[w(x_{\sigma_{\min}})]$ proves the proposition. $\square$

Two remarks. In practice the sampler starts from $\mathcal N(0,\sigma_{\max}^2I)$ rather than $p_{\sigma_{\max}}$; at $\sigma_{\max}=80$ on the $[-1,1]$ pixel scale the difference is negligible. The probability-flow ODE shares only one-time marginals. It is a deterministic map $x_\sigma\mapsto T(x_\sigma)$, so its joint law sits on the graph of $T$, while the forward joint law has a non-degenerate conditional distribution.

\emph{The committor.} Taking $\sigma_{\min}\to0$ and $w$ the indicator of class $a$, the same argument shows that the probability that a reverse-SDE trajectory through $x$ at noise level $\sigma$ ends in class $a$ equals $P(a\mid x_\sigma=x)$. The class posterior is therefore the committor of the generative process, and the Bayes log-odds is its logit.

\subsection{The cusp normal form: fold condition and jump}
\label{app:cusp}

The zeros of $f(m)=h+am-bm^3$ solve $m^3+pm+q=0$ with $p=-a/b$ and $q=-h/b$. There are three real roots exactly when $-(4p^3+27q^2)>0$, that is, $4a^3>27\,b\,h^2$. The condition is invariant under $f\mapsto\lambda f$. At equality there is a double root at $m_\ast=-\mathrm{sgn}(h)(|h|/2b)^{1/3}$, the newborn mode and its valley. Because the three roots sum to zero, the persisting mode sits at $-2m_\ast$, a distance $3(|h|/2b)^{1/3}$ away. For $h>0$ the persisting root is positive, so the class at $+1$ persists.

\subsection{How class mass enters $h$ and $\sbif$}
\label{app:tanh}

\emph{The asymmetry.} Along an exact log-likelihood ratio $z$, $p(z\mid a)=e^{z}p(z\mid b)$, so the pair density is $p(z\mid b)(w_b+w_ae^{z})$ and its log-slope is $(\log p(z\mid b))'+\mathrm{sigmoid}(z+\log(w_a/w_b))$. A change of weights changes only the sigmoid term, by $\frac14\Delta\log(w_a/w_b)$ at $z=0$. In the two-Gaussian model with classes at $\pm1$ and noise variance $\tau$, $z=2m/\tau$, so in units of $m$ the shift is $\Delta\log(w_a/w_b)/(2\tau)\approx\frac12\Delta\log(w_a/w_b)$ near the transition, where $\tau\approx1$. Equivalently, $h\approx\tanh\eta$ with $\eta=\frac12\log(w_a/w_b)$, so $|h|$ can be read as the asymmetry of a class split $e^{2\eta}:1$. The relation is a property of the coordinate. It holds for any pair of class-conditional densities, which is why we use it as a pipeline check and not as a finding.

\emph{The location.} By the fold condition $4a^3=27bh^2$, the fold happens at $a_f(h)=3(b h^2/4)^{1/3}$. If near the transition $a\approx a_c+s(\log\sigma_c-\log\sigma)$ with $s>0$, then $\log\sbif=\log\sigma_c-a_f(h)/s$. For two distributions with the same $b$, $s$ and $\sigma_c$ but different asymmetries, this gives $\Delta\log\sbif=-3(b/4)^{1/3}\Delta|h|^{2/3}/s$, which is Equation~\eqref{eq:shift}. We estimate $b$ from the cubic fit at the data's transition, and $s$ by regressing the fitted $a$ on $\log\sigma$ over the levels just above and below it.

\subsection{Why partial class tilts are weak}
\label{app:partial}

Let the tilt of Equation~\eqref{eq:tilt} be applied only for $\sigma\ge\sigma_1$. With an exact score and classifier, the marginal at $\sigma_1$ is the tilted one, $p'_{\sigma_1}(x)\propto p_{\sigma_1}(x)\sum_c r_cP_{\sigma_1}(c\mid x)$. Below $\sigma_1$ the untilted dynamics carry each state to class $c$ with probability $P_{\sigma_1}(c\mid x)$. The final frequency of class $c$ is therefore $\E_{p'_{\sigma_1}}[P_{\sigma_1}(c\mid x)]$, which differs from the untilted frequency only through the covariance, under $p_{\sigma_1}$, between $P_{\sigma_1}(c\mid x)$ and the tilt weight $\sum_{c'}r_{c'}P_{\sigma_1}(c'\mid x)$. While posteriors are diffuse that covariance is small, so a high-noise-only tilt corrects little. Conversely, if the tilt acts only for $\sigma<\sigma_1$, samples arrive at $\sigma_1$ from $p_{\sigma_1}$. The tilt then reweights each state's future by $r_c$ in proportion to $P_{\sigma_1}(c\mid x)$, which can only redistribute probability among the classes a state has not yet excluded. Each partial tilt therefore achieves only part of the correction, and the full correction needs both.

\section{Validation on Exactly Solvable Mixtures}
\label{app:controls}

\para{Two Gaussians, linear coordinate.}
For means $\pm s$ with $s\in\{\pm1\}^d$ and weights $w,1-w$, the stationary points of the marginal of $m=s\cdot x/d$ solve $m=\tanh(m/\tau+\eta)$ with $\tau=\sigma^2/d$. With the analytic score in $d=256$, the estimator of Equation~\eqref{eq:mf} recovers these points to within $\pm0.005$ at every noise level, including the fold. For $w=0.8$ the fold is at $\tau=0.450$, and the new mode is born at $m=-0.741$.

\para{Three Gaussians, the reverse-SDE test.}
Proposition~\ref{prop:joint} can only be tested with at least three classes. With two, the pair weights sum to one, the weighted marginal is the plain marginal, and the ODE preserves that too. We used $d=256$ and three classes with sign-pattern means, an isotropic component, and class-specific spreads along each class direction. The order parameter is the exact log-odds of two of the classes, and the terminal weights are the exact posteriors at $\sigma_{\min}=0.05$. With 20{,}000 trajectories and a reference subsample of the same size, ten resamples give a transition ratio of $1.000$ $[0.939,1.065]$ for the reverse SDE at 256 and at 512 steps, and $1.361$ $[1.202,1.536]$ for a 64-step Heun ODE.

\para{Recalibration on known warps.}
We applied the monotone recalibration of Section~\ref{sec:density} to known distortions of an exact log-ratio (Table~\ref{tab:calval}). It recovers affine, $\tanh$ and square-root warps closely, and a strongly convex cubic warp only partly. The ``map nonlinearity'' it reports for a purely affine warp, $0.074$, is the noise floor of that diagnostic at this sample size.

\begin{table}[h]
\centering
\caption{Recalibration of known warps $u=g(z^\ast)$ of an exact log-ratio ($8{,}000$ samples, cross-fitted).}
\label{tab:calval}
\small
\begin{tabular}{lcccc}
\toprule
warp & tilting slope, raw & tilting slope, recalibrated & correlation with $z^\ast$ & relative error \\
\midrule
affine & 1.42 & 0.98 & 0.999 & 0.054 \\
$3\tanh(z/3)$ & 1.34 & 0.99 & 0.997 & 0.079 \\
signed $\sqrt{|z|}$ & 1.65 & 0.99 & 0.998 & 0.060 \\
$z+0.08z^3$ & 0.49 & 0.78 & 0.959 & 0.485 \\
\bottomrule
\end{tabular}
\end{table}

\section{An Exactly Solvable Test Bed}
\label{app:testbed}

\para{Setup.}
Six Gaussian classes in $d=32$ with means $\mu_c$ and isotropic variances $s_c^2\in\{0.30,0.45,0.60,0.35,0.50,0.65\}^2$. Classes 0--2 and 3--5 form two superclasses separated by $\pm3$ along the first axis. Within a superclass the classes sit at the vertices of a triangle of radius $1.6$ in the next two axes, and each class has a small private offset of $0.6$ along its own axis. Under noise, $p_\sigma(x\mid c)=\mathcal N(\mu_c,(s_c^2+\sigma^2)I)$, so the score, the posterior and the committor are all closed-form. The ``data'' have uniform class weights. The ``model'' has the same shapes with weights $(0.22,0.12,0.16,0.20,0.10,0.20)$, a pure mass error of $0.238$ in mean $|\log f/f^d|$. We sample with Euler--Maruyama on a 256-step EDM schedule from $\sigma=40$ to $0.02$.

\para{Tilts.}
A full-trajectory tilt towards uniform reduces the class-mass error from $0.238$ to $0.047$ with the model's own posterior (exact), to $0.045$ with the data posterior and $r=f^t/f^m$ (first order), and to $0.047$ with the data posterior shifted by $\log(f^m/f^d)$, which is identical to the exact form here. At this size of mass error the first-order tilt is as good as the exact one. The residual $0.045$ reflects time discretisation and finite samples. Table~\ref{tab:bands} in the main text gives the band-limited shares.

\para{Committor estimators.}
From 400 model trajectories stopped at 12 levels between 8 and $0.1$, we ran 12 continuations per state. For each of the 15 pairs we computed:
\begin{itemize}[leftmargin=*]
\item the eligibility estimator, a per-state rule that admits a state only if at least 6 of its 12 continuations land in the pair;
\item the pooled cross-over estimator, with threshold $0.1$;
\item the pooled estimator's infinite-sample value, using exact committors;
\item a per-state definition computed with exact committors: the highest level from which at least half of the trajectories have pairwise committor $\ge0.9$.
\end{itemize}
The pooled estimator equals its exact target for all 15 pairs (median log-ratio $0.000$). That target sits about one grid step below the per-state definition (median log-ratio $-0.39$, a factor of $0.68$). The eligibility estimator matches the per-state definition at the median, but reads one grid step too high for 5 of the 15 pairs. In this system no within-superclass pair is decided, by the per-state definition, before the population's superclass decision ($2.43$). On CIFAR-10, by contrast, the threshold-$0.05$ decisions for vehicle pairs ($1.61$) sit below the superclass median ($2.4$), and those at threshold $0.1$ ($2.60$) above it, which is permitted, since a pair can be resolved while other classes remain open.

\section{Coordinate Studies}
\label{app:coord}

\para{Classifiers.}
The main classifier is a ResNet with a 64-channel input convolution and five residual blocks of 64, 128, 128, 256 and 256 channels (two with stride 2), BatchNorm and SiLU. The noise level enters through a sinusoidal embedding of $\frac14\log\sigma$, an MLP, and FiLM scale-and-shift in every block. It is trained on the 50{,}000 CIFAR-10 training images with horizontal flips, $\sigma$ log-uniform on $[0.05,10]$, AdamW (learning rate $2\times10^{-3}$ with a one-cycle schedule, weight decay $5\times10^{-4}$), batch 256, for 25 epochs. Its test accuracy is $85.2\%$, $81.8\%$, $76.0\%$, $64.2\%$, $48.6\%$ and $34.4\%$ at $\sigma=0.05$, $0.25$, $0.5$, $1$, $2$ and $4$. The second architecture is a plain VGG stack without residual connections, with GroupNorm and GELU. It conditions on noise by rescaling the input by $1/\sqrt{1+\sigma^2}$ and appending $\frac14\log\sigma$ as a constant input channel, and its test accuracy is $81.6\%$, $72.9\%$ and $62.5\%$ at $\sigma=0.05$, $0.5$ and $1$. The null classifier has the main architecture with 30 outputs: the ten classes, their vertical flips and their red–blue swaps.

\para{Non-canonical warps.}
Relative to the raw log-odds, re-running the census along $\tanh(z/2)$ moves the transitions by a median factor of $4.5$ and gives $h$ correlated at only $0.67$. A signed square root moves them by a factor of $9.1$, with $h$ correlated at $0.42$. These are the distortions the canonical coordinate is meant to avoid.

\para{Recalibration on real pairs.}
At each pair's transition, the raw log-odds has a tilting slope with median $0.85$ (IQR $0.72$--$0.91$), and the recalibrated coordinate $0.92$ ($0.89$--$0.95$), with median absolute intercept $0.07$. Agreement between the coordinates is summarised below.

\begin{table}[h]
\centering
\caption{Agreement between coordinates over the 44 pairs (correlation, sign agreement of $h$, and median ratio of $\sbif$).}
\small
\begin{tabular}{lccccc}
\toprule
comparison & corr.\ $h$ & sign agr. & median $|\Delta h|$ & corr.\ $\log\sbif$ & median ratio \\
\midrule
ResNet: raw vs.\ recal. & 0.822 & 0.70 & 0.058 & 0.968 & 1.00 \\
VGG: raw vs.\ recal. & 0.834 & 0.77 & 0.056 & 0.946 & 0.90 \\
raw: ResNet vs.\ VGG & 0.938 & 0.86 & 0.027 & 0.958 & 1.11 \\
recal.: ResNet vs.\ VGG & 0.903 & 0.93 & 0.039 & 0.944 & 1.00 \\
\bottomrule
\end{tabular}
\end{table}

\para{The symmetric null.}
Of the twenty exactly symmetric pairs (ten classes, two transformations), thirteen produce a transition; the others never develop two modes. Their $|h|$ has median $0.013$ and maximum $0.047$ through the raw coordinate, median $0.005$ and maximum $0.024$ with the antisymmetrised coordinate, and median $0.063$ and maximum $0.151$ along the recalibrated coordinate. Through the same 30-class classifier, the real pairs have median $|h|=0.072$, and 27 of 44 exceed the null's maximum.

\begin{figure}[h]
\centering
\includegraphics[width=0.66\textwidth]{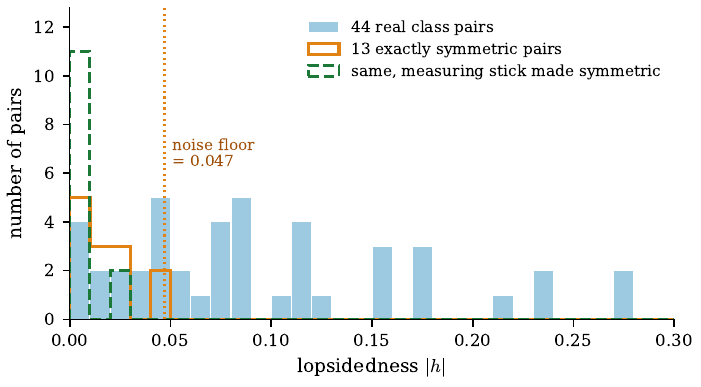}
\caption{The noise floor for lopsidedness. Filled bars: $|h|$ of the 44 real class pairs (census classifier). Orange outline: 13 pairs that are exactly symmetric by construction (a class and its vertical flip or colour-swapped copy), measured through a classifier trained on them. Green dashed outline: the same symmetric pairs with the measuring stick made exactly antisymmetric, which leaves only estimation noise. The dotted line is the largest symmetric-pair value, $0.047$, used as the noise floor.}
\label{fig:null}
\end{figure}

\para{The Gaussian baseline.}
For each pair we took the pixel-space means $\mu_a,\mu_b$ and the pooled covariance $\Sigma$ of the training images, and computed the discriminant separation $d'(\sigma)^2=\sum_k(e_k^\top\Delta\mu)^2/(\lambda_k+\sigma^2)$ over the eigenpairs of $\Sigma$. We then solved for the noise level $\sigma_G$ at which $d'=2$, the separation at which an equal mixture of two equal-variance Gaussians stops being bimodal. $\log\sigma_G$ correlates with $\log\sbif$ at $0.89$ (raw) and $0.82$ (recalibrated). The transitions sit at a roughly constant multiple of $\sigma_G$ (median $6.6$), because the classifier sees far more than second-order structure. After regressing out the trend, the category residuals are $+0.02$ (vehicle/animal), $+0.04$ (vehicle/vehicle) and $-0.23$ (animal/animal) in $\log\sbif$.

\section{Linear Directions on CIFAR-10}
\label{app:linear}

For an unconditional model, the data density along a direction is that of the whole training set. We therefore tested the eleven directions along which the full distribution could plausibly form two lumps: each class against the rest, and vehicles against animals. Their Fisher separabilities are $0.92$--$1.68$. The projections are single humps with shoulders, and the only mode births found were at $\tau=0.01$--$0.07$, where shoulders are smoothed away. For the pretrained model's softest directions we power-iterated the batch-averaged Jacobian of its denoiser at $\sigma\in\{1,2,4,8,16\}$. The top eigenvalue is $0.94$, $0.91$, $0.88$, $0.75$ and $0.48$. The data are unimodal along these directions at every $\sigma$, and the classes they separate best, airplane, frog and ship, share uniform backgrounds.

\section{Statistics}
\label{app:stats}

\para{Class-level tests.}
Each class appears in nine pairs, so pair-level statistics are not independent. For the hierarchy we computed the ratio of median transitions for all $\binom{10}{4}=210$ ways of choosing four ``vehicle'' classes. For the tighter-class tendency we permuted class variances among classes 10{,}000 times. Leaving out one class at a time moves the raw correlation between $-0.39$ and $-0.73$, so no single class drives it. It is not significant at the class level after recalibration ($p=0.08$ for the correlation, $p=0.30$ for the count).

\para{Resampling counts.}
Per-pair point estimates are medians of bootstrap resamples: 100 per pair for the main comparisons (Section~\ref{sec:edm} and Table~\ref{tab:models}), 60 for the planted-error experiments, and 30 for the corruption experiments of Table~\ref{tab:power}. Intervals for quantities pooled over pairs use 1{,}000 class-block resamples. The per-pair census intervals of Table~\ref{tab:census_full} use 1{,}000 resamples, and the pretrained per-pair table uses 100 paired resamples, so its intervals are coarse.

\para{The noise floor of the class-mass error.}
With uniform classes, multinomial noise alone gives a mean $|\log\hat f/0.1|$ of $0.076$, $0.063$, $0.054$, $0.048$, $0.044$, $0.040$ and $0.024$ at $n=1{,}000$, $1{,}500$, $2{,}000$, $2{,}500$, $3{,}000$, $3{,}500$ and $10{,}000$ (2{,}000 simulations each).

\para{Reliability and multiplicity.}
Reliabilities are split-half correlations, stepped up to the full sample with the Spearman--Brown formula $2r/(1+r)$. The ceiling for a correlation between two measured quantities is the geometric mean of their reliabilities. Pair-level tests of $|h|$ against the through-classifier floor use 500 bootstrap resamples, and we control the false discovery rate at 5\% with the Benjamini--Hochberg procedure (22 pairs uncorrected, 19 controlled). Correlations that pool pairs, and ratios averaged over pairs, use a class-block bootstrap that resamples the ten classes with replacement and weights each pair by the product of its classes' multiplicities.

\para{Sensitivity.}
\begin{table}[h]
\centering
\caption{Sensitivity of the census to the density estimator (raw coordinate, 44 pairs), relative to the default: bandwidth $1.5\times$ Silverman, merge distance $0.03$, prominence $0.05$ nats, no persistence requirement.}
\small
\begin{tabular}{lcccc}
\toprule
variant & corr.\ $h$ & median $|\Delta h|$ & corr.\ $\log\sbif$ & median $\sbif$ ratio \\
\midrule
bandwidth $0.75\times$ & 0.964 & 0.030 & 0.978 & 1.22 \\
bandwidth $1.0\times$ & 0.984 & 0.020 & 0.989 & 1.22 \\
bandwidth $2.0\times$ & 0.983 & 0.016 & 0.983 & 0.82 \\
merge $0.06$, prominence $0.10$ & 0.994 & 0.010 & 0.993 & 0.90 \\
persistence over 2 levels & 1.000 & 0.000 & 1.000 & 1.00 \\
persistence over 3 levels & 1.000 & 0.000 & 1.000 & 1.00 \\
\bottomrule
\end{tabular}
\end{table}

\section{Committors: Protocol and Checks}
\label{app:committor}

\para{Protocol.}
We generated 320 trajectories with the same reverse SDE as in the main experiments and stored their states at the 12 schedule steps closest to a logarithmic grid between $\sigma=7.9$ and $0.10$ (the levels of Table~\ref{tab:fidelity}). From each stored state we ran 12 continuations with independent noise to $\sigma_{\min}=0.05$, denoised the end point, and labelled it with the ViT. The trajectories' own final labels give the class frequencies; deer is the rarest at $4.4\%$. At each stored state we also recorded our noise-conditional classifier's posterior.

\para{A biased estimator, and its fix.}
Our first estimator of pairwise decisions considered a state only if at least six of its twelve continuations landed in the pair. At high noise that rule keeps only states that are already heading towards the pair, so it overstates commitment there. It gave median pair decisions of $3.7$--$5.4$, including animal pairs supposedly decided above the noise level at which animal-versus-vehicle is settled ($2.4$), which is impossible. The pooled cross-over rate of Section~\ref{sec:branching} weights each trajectory by how many of its continuations land in the pair, with no threshold. It gives $3.15$, $2.60$ and $1.52$ for the three categories, all below the corresponding coarser decision. Intervals come from 200 bootstrap resamples of trajectories.

\para{Committor fidelity.}
Table~\ref{tab:fidelity} lists, at each level, the total-variation distance between the model's committor and the classifier posterior; the part expected from 12-draw multinomial noise; and the excess above the low-noise plateau, which reflects the classifier's own errors. Excesses below zero at high noise reflect the classifier's diffuse posteriors there, which shows that the floor is not constant.

\begin{table}[h]
\centering
\caption{Committor fidelity of the pretrained unconditional model (320 trajectories, 12 continuations per state). ``Above floor'' subtracts the mean excess at the two lowest noise levels ($0.088$); the interval is from 100 bootstrap resamples of trajectories. ``Slope'' regresses the measured committor on the classifier posterior over all states and classes.}
\label{tab:fidelity}
\small
\begin{tabular}{ccccc}
\toprule
$\sigma$ & total variation & excess over 12-draw noise & above floor [95\% CI] & slope \\
\midrule
7.95 & 0.354 & 0.029 & $-0.059$ [$-0.086$, $-0.037$] & 1.03 \\
5.42 & 0.344 & 0.034 & $-0.054$ [$-0.080$, $-0.027$] & 1.09 \\
3.61 & 0.361 & 0.086 & $-0.002$ [$-0.040$, $+0.015$] & 1.11 \\
2.42 & 0.359 & 0.118 & $+0.029$ [$-0.005$, $+0.049$] & 1.09 \\
1.63 & 0.325 & 0.125 & $+0.037$ [$+0.015$, $+0.062$] & 1.07 \\
1.11 & 0.303 & 0.147 & $+0.058$ [$+0.034$, $+0.089$] & 1.00 \\
0.74 & 0.261 & 0.142 & $+0.054$ [$+0.024$, $+0.083$] & 0.98 \\
0.50 & 0.211 & 0.127 & $+0.038$ [$+0.011$, $+0.061$] & 0.96 \\
0.33 & 0.174 & 0.108 & $+0.019$ [$+0.002$, $+0.039$] & 0.96 \\
0.22 & 0.143 & 0.087 & $-0.001$ [$-0.012$, $+0.015$] & 0.97 \\
0.15 & 0.132 & 0.087 & $-0.001$ [$-0.008$, $+0.002$] & 0.97 \\
0.10 & 0.134 & 0.089 & $+0.001$ [$-0.002$, $+0.008$] & 0.96 \\
\bottomrule
\end{tabular}
\end{table}

\para{Is the excess mass?}
Shifting the classifier posterior by the model's class frequencies, $P_m(c\mid x)\propto P_d(c\mid x)f^m_c/f^d_c$, and recomputing the raw excess changes it by between $-0.007$ and $+0.008$ at the twelve levels. All 95\% intervals of the difference span zero (300 bootstrap resamples of trajectories). The floor cancels in this difference.

\section{The Pretrained Model: Details and Checks}
\label{app:pretrained}

We use the checkpoint \texttt{edm-cifar10-32x32-uncond-vp} \citep{karras2022elucidating} in half precision; its relative deviation from single precision on a test batch is $1.4\times10^{-3}$. The reverse SDE is discretised with Euler--Maruyama, 256 steps on the EDM schedule ($\rho=7$) from $\sigma=80$ to $0.05$, and logits are recorded at 68 levels between $0.086$ and $7.95$. The reference is the 10{,}000 test images with four noise draws each, noised first to $\sigma_{\min}$ (for the terminal weights) and then onward. The standardisation of the log-odds is computed from the reference at each level and applied to both sides. The standard sampler is EDM's deterministic Heun method with 18 steps (35 evaluations) from $\sigma=80$ to $0.002$, in full precision. FID uses Inception features \citep{heusel2017gans} against the 50{,}000 training images. The independent classifier is a ViT-Base fine-tuned on CIFAR-10 at $224\times224$ resolution \citep{dosovitskiy2021image}, with 98.4\% accuracy on 2{,}000 test images.

Class frequencies (our classifier / ViT), relative to the test set, which both classifiers read as uniform to within $0.003$:

\begin{table}[h]
\centering
\small
\caption{Generated class frequencies under two samplers (sample size; labelling classifier).}
\footnotesize\setlength{\tabcolsep}{3pt}
\begin{tabular}{lcccccccccc}
\toprule
 & airpl. & autom. & bird & cat & deer & dog & frog & horse & ship & truck \\
\midrule
SDE, fp16 (14.5k; ours) & 0.100 & 0.128 & 0.067 & 0.081 & 0.075 & 0.094 & 0.108 & 0.129 & 0.094 & 0.125 \\
SDE, fp16 (3.5k; ours) & 0.101 & 0.133 & 0.076 & 0.082 & 0.082 & 0.104 & 0.104 & 0.111 & 0.094 & 0.114 \\
SDE, fp16 (3.5k; ViT) & 0.099 & 0.130 & 0.082 & 0.089 & 0.079 & 0.095 & 0.100 & 0.113 & 0.093 & 0.119 \\
Heun, fp32 (10k; ours) & 0.112 & 0.111 & 0.082 & 0.096 & 0.086 & 0.094 & 0.102 & 0.115 & 0.106 & 0.095 \\
Heun, fp32 (10k; ViT) & 0.110 & 0.109 & 0.086 & 0.103 & 0.085 & 0.083 & 0.104 & 0.115 & 0.107 & 0.096 \\
\bottomrule
\end{tabular}
\end{table}

For the interventions, the imposed tilt (bird $\times2$, automobile $\times\frac12$) moved bird from $0.075$ to $0.096$ and automobile from $0.133$ to $0.104$. That is less than the tilt would achieve with a perfect classifier. The law of Section~\ref{sec:edm} is tested against the frequencies actually measured on the generated images. The degraded run, with the score scaled by $0.85$ at every noise level, produced FID $127.7$ and grossly distorted frequencies (deer $2\%$, truck $20\%$), and no pair developed a transition.

\para{Samplers.}
Class frequencies by the ViT for the same checkpoint (test set read as 0.100, 0.101, 0.100, 0.102, 0.101, 0.096, 0.100, 0.099, 0.100, 0.099). Euler--Maruyama in full precision: 256 steps (0.101, 0.131, 0.070, 0.091, 0.071, 0.089, 0.096, 0.133, 0.099, 0.119); 512 steps (0.102, 0.126, 0.087, 0.096, 0.085, 0.087, 0.099, 0.127, 0.080, 0.111); 1{,}024 steps (0.093, 0.136, 0.083, 0.101, 0.072, 0.088, 0.096, 0.134, 0.098, 0.099). Against data mass-matched to it, the 256-step full-precision run gives asymmetry agreement $0.964$ ($[0.84,1.00]$) and a location ratio of $1.09$ ($[0.88,1.43]$), the same picture as the half-precision runs. FID at these small sample sizes ($17.2$, $22.5$, $32.7$ at $n=2{,}000$, $1{,}500$, $1{,}000$) is dominated by its small-sample bias and cannot be compared across rows.

\para{The corrected tilt.}
Round 1 uses Equation~\eqref{eq:tilt} with $f^t=f^d$ and $f^m$ from the 14{,}500-trajectory run. Round 2 keeps $f^m$ but sets the target to $f^t_{\rm eff}\propto f^{d}\cdot f^{d}/f^{(1)}$, where $f^{(1)}$ are the frequencies achieved by round 1. ViT frequencies after round 2: (0.090, 0.102, 0.095, 0.106, 0.088, 0.096, 0.112, 0.118, 0.092, 0.102).

\para{DDPM.}
\texttt{google/ddpm-cifar10-32} with the linear $\beta$ schedule ($10^{-4}$ to $0.02$, 1{,}000 steps), so $\sigma_t$ ranges from $0.010$ to $157$. At each sampler noise level $\sigma$ we use the training timestep $t$ minimising $|\log\sigma_t-\log\sigma|$, evaluate $\varepsilon_\theta(x\sqrt{\bar\alpha_t},t)$ in half precision, and return $D=x-\sigma_t\varepsilon_\theta$. Over our 256-step schedule, the relative mismatch $|\sigma_t/\sigma-1|$ has median $0.18\%$, maximum $1.3\%$ for $\sigma\ge0.1$, and $2.6\%$ at $\sigma\approx0.06$. ViT frequencies: (0.085, 0.098, 0.071, 0.142, 0.081, 0.062, 0.186, 0.091, 0.064, 0.121); FID $21.4$ at $n=3{,}000$ with our sampler.

\para{Graded corruptions and band-limited corrections.}
Table~\ref{tab:power} lists the corrupted and band-corrected runs (2{,}500 trajectories each). Transitions are compared pair by pair with an uncorrupted run of the same size, using 30 bootstrap medians per pair, and summarised as the geometric mean ratio over pairs with a class-block interval. The first row compares the baseline with an independent resampling of itself.

\begin{table}[h]
\centering
\caption{Corruptions of the score and band-limited class corrections. ``Class-mass error'' is the mean $|\log f/f_{\rm ref}|$ with our classifier's labels.}
\label{tab:power}
\footnotesize\setlength{\tabcolsep}{4pt}
\begin{tabular}{lccccc}
\toprule
run & $\sbif$ ratio, raw & $\sbif$ ratio, rebalanced & rebalanced $h$ vs data & class-mass error & FID \\
\midrule
baseline (self) & 0.98 [0.94, 1.03] & 1.00 [0.97, 1.03] & 0.91 & 0.134 & 12.9 \\
score $\times0.995$ & 0.81 [0.72, 0.90] & 0.94 [0.83, 1.04] & 0.89 & 0.203 & 13.7 \\
score $\times0.98$ & 0.71 [0.57, 0.90] & 0.84 [0.68, 1.11] & 0.94 & 0.227 & 15.7 \\
score $\times0.95$ & 0.55 [0.39, 0.69] & 0.87 [0.75, 0.97] & 0.84 & 0.393 & 25.4 \\
noise level $\times1.1$ & 0.55 [0.44, 0.68] & 0.81 [0.66, 1.05] & 0.85 & 0.349 & 35.7 \\
noise level $\times1.25$ & 0.60 [0.30, 1.50] & 0.66 [0.42, 1.27] & 0.21 & 1.009 & 106.1 \\
\midrule
correction, all $\sigma$ & 0.92 [0.79, 1.02] & 0.90 [0.81, 1.00] & 0.93 & 0.076 & 12.7 \\
correction, $\sigma\ge2.4$ & 0.88 [0.76, 1.03] & 0.89 [0.73, 1.06] & 0.95 & 0.130 & 13.0 \\
correction, $\sigma\le2.4$ & 0.85 [0.72, 0.96] & 0.90 [0.70, 1.10] & 0.93 & 0.138 & 13.3 \\
\bottomrule
\end{tabular}
\end{table}

\begin{figure}[h]
\centering
\includegraphics[width=\textwidth]{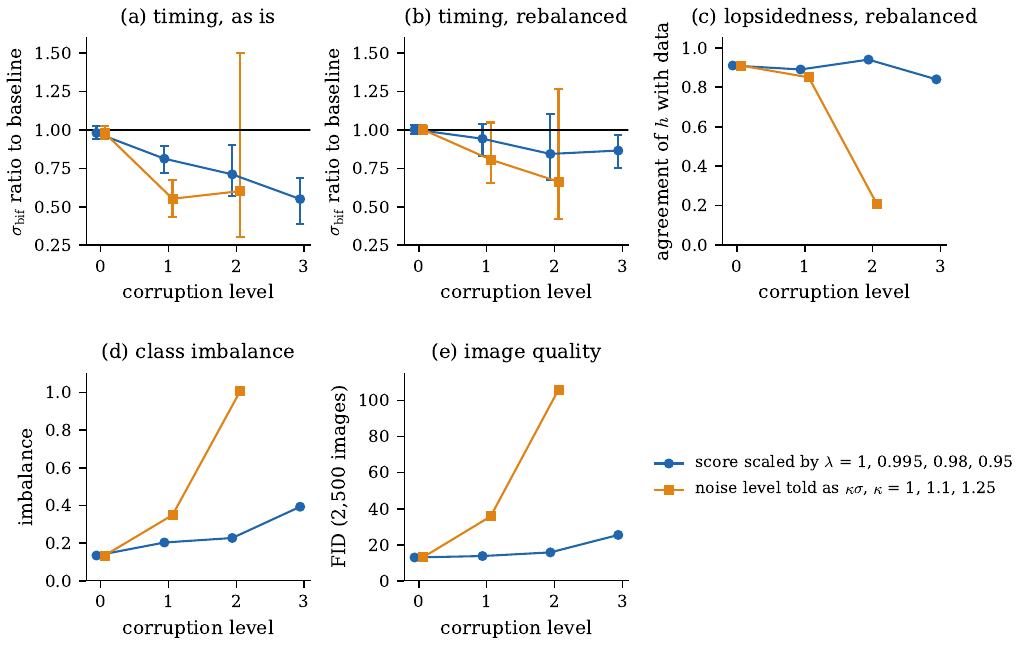}
\caption{Graded corruptions of the score (Table~\ref{tab:power}). Blue: score multiplied by $\lambda$; orange: denoiser told the noise level is $\kappa\sigma$. (a) Ratio of density transitions to the uncorrupted baseline, as sampled, and (b) after rebalancing, with class-block intervals. (c) Agreement of the rebalanced lopsidedness with the data, which illustrates the low sensitivity of this statistic quantified in Section~\ref{sec:power}. (d) Class imbalance. (e) FID at 2{,}500 images.}
\label{fig:power}
\end{figure}

\section{Digits: Linear Coordinates, Small Networks, and Class Bias}
\label{app:mnist}

\para{Census along linear coordinates.}
On MNIST and FashionMNIST, the Fisher direction $u\propto(S_a+S_b+I)^{-1}(\mu_a-\mu_b)$ is a good order parameter, and the data side needs only a histogram of $u\cdot x_0$ convolved with a Gaussian. With 1{,}000 bootstrap resamples and 180 symmetrised controls per dataset, 37 of 45 MNIST pairs (median $|h|=0.026$) and 30 of 45 FashionMNIST pairs (median $0.015$) exceed the floor with intervals excluding zero (Table~\ref{tab:mnist}). Across pairs, the jump at the transition scales as $(|h|/2b)^{0.36}$, close to the normal form's $1/3$. Synthetic pairs with known unfoldings, run through the same pipeline, give the same exponent ($0.366$) and the same 10\% shortfall in prefactor. This exponent therefore reflects the truncated-cubic fit rather than the data, and we do not use it as evidence.

\begin{table}[h]
\caption{MNIST census along the LDA direction. $\tbif$ is in units of $\tau=\sigma^2/\ell^2$.}
\label{tab:mnist}
\begin{center}
\scriptsize\setlength{\tabcolsep}{3pt}
\begin{tabular}{lccccc|lccccc}
\toprule
pair & $\tbif$ & jump & $h$ & $a$ & $b$ & pair & $\tbif$ & jump & $h$ & $a$ & $b$ \\
\midrule
(0,1) & 0.906 & 0.72 & $-0.011$ & $+0.042$ & 0.277  & (2,9) & 0.754 & 1.07 & $-0.041$ & $+0.143$ & 0.350  \\
(0,2) & 0.827 & 0.74 & $+0.011$ & $+0.045$ & 0.272  & (3,4) & 0.792 & 0.95 & $-0.027$ & $+0.101$ & 0.323  \\
(0,3) & 0.864 & 0.63 & $-0.007$ & $+0.026$ & 0.260  & (3,5) & 0.571 & 1.13 & $-0.050$ & $+0.167$ & 0.366  \\
(0,4) & 0.855 & 0.84 & $-0.018$ & $+0.069$ & 0.297  & (3,6) & 0.782 & 0.96 & $-0.028$ & $+0.104$ & 0.325  \\
(0,5) & 0.748 & 1.06 & $-0.040$ & $+0.142$ & 0.354  & (3,7) & 0.770 & 1.07 & $-0.040$ & $+0.140$ & 0.343  \\
(0,6) & 0.810 & 0.86 & $-0.019$ & $+0.074$ & 0.298  & (3,8) & 0.770 & 0.55 & $-0.004$ & $+0.014$ & 0.236  \\
(0,7) & 0.812 & 0.93 & $-0.026$ & $+0.097$ & 0.322  & (3,9) & 0.765 & 1.02 & $-0.034$ & $+0.121$ & 0.328  \\
(0,8) & 0.931 & 0.40 & $-0.002$ & $-0.001$ & 0.231  & (4,5) & 0.835 & 0.59 & $-0.006$ & $+0.020$ & 0.255  \\
(0,9) & 0.880 & 0.70 & $-0.010$ & $+0.038$ & 0.268  & (4,6) & 0.854 & 0.79 & $+0.014$ & $+0.056$ & 0.281  \\
(1,2) & 0.696 & 1.23 & $+0.075$ & $+0.243$ & 0.439  & (4,7) & 0.750 & 0.97 & $+0.029$ & $+0.105$ & 0.320  \\
(1,3) & 0.828 & 1.01 & $+0.033$ & $+0.120$ & 0.329  & (4,8) & 0.797 & 0.93 & $+0.025$ & $+0.094$ & 0.314  \\
(1,4) & 0.856 & 0.92 & $+0.025$ & $+0.092$ & 0.314  & (4,9) & 0.691 & 0.97 & $-0.027$ & $+0.102$ & 0.306  \\
(1,5) & 0.901 & 0.70 & $+0.010$ & $+0.037$ & 0.266  & (5,6) & 0.844 & 0.28 & $+0.000$ & $-0.010$ & 0.222  \\
(1,6) & 0.826 & 0.99 & $+0.032$ & $+0.117$ & 0.337  & (5,7) & 0.857 & 0.47 & $+0.002$ & $+0.004$ & 0.242  \\
(1,7) & 0.829 & 1.01 & $+0.033$ & $+0.120$ & 0.332  & (5,8) & 0.615 & 1.26 & $+0.074$ & $+0.236$ & 0.408  \\
(1,8) & 0.630 & 1.36 & $+0.108$ & $+0.322$ & 0.478  & (5,9) & 0.788 & 0.89 & $+0.020$ & $+0.078$ & 0.294  \\
(1,9) & 0.880 & 0.83 & $+0.017$ & $+0.066$ & 0.292  & (6,7) & 0.906 & 0.52 & $-0.004$ & $+0.011$ & 0.250  \\
(2,3) & 0.766 & 0.81 & $+0.014$ & $+0.057$ & 0.271  & (6,8) & 0.862 & 0.59 & $+0.006$ & $+0.020$ & 0.250  \\
(2,4) & 0.746 & 1.05 & $-0.040$ & $+0.138$ & 0.349  & (6,9) & 0.912 & 0.51 & $-0.004$ & $+0.010$ & 0.248  \\
(2,5) & 0.712 & 1.03 & $-0.037$ & $+0.130$ & 0.341  & (7,8) & 0.804 & 0.95 & $+0.027$ & $+0.099$ & 0.315  \\
(2,6) & 0.756 & 1.01 & $-0.033$ & $+0.119$ & 0.326  & (7,9) & 0.703 & 0.95 & $-0.024$ & $+0.090$ & 0.289  \\
(2,7) & 0.732 & 1.08 & $-0.042$ & $+0.148$ & 0.353  & (8,9) & 0.752 & 1.09 & $-0.043$ & $+0.150$ & 0.349  \\
(2,8) & 0.672 & 1.09 & $-0.043$ & $+0.149$ & 0.350  & & & & & & \\

\bottomrule
\end{tabular}
\end{center}
\end{table}

\para{Small networks commit late.}
We trained EDM U-Nets with about one million parameters on pairs of digits, three seeds for most pairs, and measured their transitions with the mean-force identity (Table~\ref{tab:small}). Early in training they also carry an asymmetry the data do not have. For (0,1), $h=+0.36\pm0.05$ at epoch 20 against $-0.01$ for the data, falling to $0.00\pm0.02$ by epoch 80.

\begin{table}[h]
\centering
\caption{Transition of small trained denoisers along the LDA direction, in noise variance $\tau$ (mean $\pm$ sd over three seeds where given).}
\label{tab:small}
\small
\begin{tabular}{lccc}
\toprule
pair & data & epoch 20 & epoch 80 \\
\midrule
MNIST (0,1) & 0.905 & $0.28\pm0.02$ & $0.74\pm0.03$ \\
MNIST (0,8) & 0.933 & $0.40\pm0.01$ & $0.64\pm0.02$ \\
MNIST (3,8) & 0.778 & not bimodal & 0.58 \\
FashionMNIST (1,7) & 0.917 & 0.28 & 0.75 \\
\bottomrule
\end{tabular}
\end{table}

\para{Where their class bias lives.}
Undertrained networks generate class-imbalanced samples from balanced data. We replaced the network's drift along the class axis with the data's, $D'(x)=D(x)+\sigma^2[g_{\rm data}(m;\sigma)-g_{\rm net}(m;\sigma)]\,u/\ell$, either at all noise levels or in one band at a time (Table~\ref{tab:bands}). A sham replacement, using the network's own drift re-estimated from independent samples, and a replacement along a random orthogonal direction both had no effect. Replacing at all noise levels removed most of the stochastic sampler's bias for two of four pairs, and 19--35\% of the deterministic sampler's bias where that bias was large. No single band dominates, and the shares roughly add up: the bias accumulates over the whole trajectory.

\begin{table}[h]
\centering
\caption{Band-limited drift replacement (epoch 20, three seeds). Excess fraction of the first class over the real-data baseline, mean $\pm$ sd.}
\label{tab:bands}
\footnotesize\setlength{\tabcolsep}{4pt}
\begin{tabular}{llccccc}
\toprule
pair & sampler & none & $\tau>1.5$ & $0.3\le\tau\le1.5$ & $\tau<0.3$ & all $\tau$ \\
\midrule
(0,1) & Heun-40 & $+0.213\pm0.034$ & $+0.186\pm0.035$ & $+0.188\pm0.028$ & $+0.193\pm0.038$ & $+0.138\pm0.024$ \\
(0,1) & SDE-40 & $+0.097\pm0.062$ & $+0.079\pm0.064$ & $+0.056\pm0.056$ & $+0.071\pm0.066$ & $-0.008\pm0.042$ \\
(3,8) & SDE-40 & $+0.101\pm0.022$ & $+0.098\pm0.029$ & $+0.077\pm0.025$ & $+0.073\pm0.031$ & $+0.030\pm0.017$ \\
(0,8) & Heun-40 & $-0.076\pm0.001$ & $-0.065\pm0.006$ & $-0.072\pm0.006$ & $-0.079\pm0.007$ & $-0.061\pm0.010$ \\
(2,7) & Heun-40 & $+0.184\pm0.011$ & $+0.164\pm0.006$ & $+0.172\pm0.002$ & $+0.169\pm0.010$ & $+0.149\pm0.004$ \\
(2,7) & SDE-40 & $+0.102\pm0.007$ & $+0.107\pm0.007$ & $+0.098\pm0.007$ & $+0.101\pm0.010$ & $+0.089\pm0.018$ \\
\bottomrule
\end{tabular}
\end{table}

\section{Does the Fold's Shape Affect Sampling?}
\label{app:sampling}

In one-dimensional reduced dynamics with the exact drift of all 45 MNIST pairs, stochastic samplers with few steps under-populate the persisting class in proportion to $|h|$. Deterministic samplers cannot show any such effect, because a one-dimensional flow preserves quantiles. The reduced dynamics reproduces the marginals of $m$ only in continuous time, however, so we repeated the test in $d=784$ on a two-class mixture with unequal spreads along the class direction and an analytic score, with real samplers and 4{,}000 samples (Table~\ref{tab:fulldim}). No sampler shows a dependence on $h$; all errors stay within about 2\%. We also found that, along real ODE trajectories of small networks, the sign of the denoised class coordinate never changes after the first step for at least three quarters of trajectories.

\begin{table}[h]
\centering
\caption{Class-fraction error towards the persisting class in $d=784$ with an exact score (s.e.\ $\approx0.008$).}
\label{tab:fulldim}
\small
\begin{tabular}{ccccccc}
\toprule
$h$ & Heun-10 & Heun-40 & SDE-10 & SDE-40 & SDE-160 \\
\midrule
$-0.0002$ & $-0.014$ & $-0.014$ & $-0.008$ & $-0.017$ & $-0.015$ \\
$+0.0080$ & $+0.007$ & $+0.010$ & $-0.002$ & $+0.015$ & $+0.011$ \\
$+0.0313$ & $-0.010$ & $-0.004$ & $-0.014$ & $+0.006$ & $+0.002$ \\
$+0.0831$ & $-0.022$ & $-0.018$ & $-0.013$ & $+0.002$ & $-0.007$ \\
$+0.2027$ & $-0.020$ & $-0.015$ & $-0.004$ & $+0.003$ & $+0.000$ \\
\bottomrule
\end{tabular}
\end{table}

\section{Entropy-Variance Detectors}
\label{app:detector}

A common way to locate the critical window is the noise level at which the variance across samples of a conditional entropy peaks \citep{stancevic2026information}. Along MNIST LDA coordinates, the peak of $\mathrm{Var}[H(c\mid m)]$ sits at a median of $0.88\,\tbif$, a VP-time offset of $-0.07$. One pair, (3,6), gives a much larger ratio. Its variance curve rises again above $\tau\approx3$, where the posterior carries no information, and the maximum jumps to the edge of the grid; restricting the search to $\tau<3$ removes the outlier. Detectors of this kind therefore sit close to the population transition, and, by Section~\ref{sec:branching}, somewhat after individual samples have decided.

\section{Compute and Assets}
\label{app:compute}

All experiments ran on a single NVIDIA T4 GPU (16 GB) in a cloud notebook environment, about 80 GPU-hours in total. The main costs were the pretrained-model runs (about 5 hours for 14{,}500 trajectories plus the reference, about 8.5 hours for the first interventions, about 4 hours for branching, about 1.5 hours for the conditional model, about 6.5 hours for the corruption and band experiments, and about 6.6 hours for the sampler, corrected-tilt and DDPM runs) and the coordinate studies (about 5 hours). The statistical re-analyses ran on CPU. CIFAR-10 \citep{krizhevsky2009learning} is distributed by its authors without an explicit licence. MNIST \citep{lecun1998gradient} is available under CC BY-SA 3.0 and Fashion-MNIST \citep{xiao2017fashion} under the MIT licence. The EDM code and checkpoints \citep{karras2022elucidating} are licensed CC BY-NC-SA 4.0, and we use them for non-commercial research only.

\clearpage
\section{Per-Pair Results}
\label{app:tables}

\begin{table}[h]
\centering
\caption{CIFAR-10 census (test set). Transition noise with 95\% interval (raw coordinate); asymmetry $h$ with 95\% interval along the raw log-odds and along the recalibrated log-ratio. ``Robust'' marks pairs asymmetric beyond the through-classifier null in both coordinates, with the same sign. Cat/dog never develops two modes.}
\label{tab:census_full}
\scriptsize
\begin{tabular}{lcccc}
\toprule
pair & $\sbif$ (raw) & $h$ (raw) & $h$ (recalibrated) & robust \\
\midrule
airplane/automobile & 1.33 [1.20, 1.47] & $+0.006$ [$-0.022$, $+0.047$] & $-0.098$ [$-0.123$, $-0.076$] &  \\
airplane/bird & 2.69 [1.80, 3.63] & $+0.010$ [$-0.025$, $+0.036$] & $+0.025$ [$-0.050$, $+0.145$] &  \\
airplane/cat & 1.63 [1.47, 1.99] & $-0.126$ [$-0.151$, $-0.060$] & $-0.113$ [$-0.135$, $-0.067$] &  \\
airplane/deer & 2.43 [1.47, 4.44] & $-0.070$ [$-0.105$, $-0.014$] & $-0.053$ [$-0.116$, $-0.021$] &  \\
airplane/dog & 1.80 [1.63, 2.20] & $-0.079$ [$-0.111$, $+0.000$] & $-0.014$ [$-0.102$, $+0.034$] &  \\
airplane/frog & 2.69 [2.20, 2.97] & $-0.275$ [$-0.303$, $-0.233$] & $-0.316$ [$-0.340$, $-0.286$] & yes \\
airplane/horse & 1.63 [1.47, 1.80] & $-0.110$ [$-0.138$, $-0.082$] & $-0.027$ [$-0.057$, $+0.003$] &  \\
airplane/ship & 0.54 [0.49, 0.60] & $-0.088$ [$-0.112$, $-0.067$] & $-0.101$ [$-0.121$, $-0.081$] &  \\
airplane/truck & 1.09 [0.98, 1.20] & $-0.050$ [$-0.073$, $-0.027$] & $-0.054$ [$-0.080$, $-0.029$] &  \\
automobile/bird & 1.80 [1.63, 2.20] & $-0.043$ [$-0.078$, $-0.003$] & $+0.023$ [$-0.007$, $+0.048$] &  \\
automobile/cat & 1.47 [1.33, 1.63] & $-0.174$ [$-0.203$, $-0.142$] & $-0.268$ [$-0.285$, $-0.250$] & yes \\
automobile/deer & 1.80 [1.63, 1.99] & $-0.154$ [$-0.179$, $-0.126$] & $-0.267$ [$-0.289$, $-0.238$] & yes \\
automobile/dog & 1.80 [1.63, 2.20] & $-0.117$ [$-0.143$, $-0.088$] & $-0.122$ [$-0.160$, $-0.085$] &  \\
automobile/frog & 1.99 [1.80, 1.99] & $-0.154$ [$-0.177$, $-0.133$] & $-0.187$ [$-0.219$, $-0.164$] & yes \\
automobile/horse & 1.63 [1.47, 1.80] & $-0.101$ [$-0.125$, $-0.075$] & $-0.089$ [$-0.114$, $-0.060$] &  \\
automobile/ship & 1.09 [0.98, 1.47] & $-0.083$ [$-0.116$, $-0.032$] & $+0.098$ [$+0.068$, $+0.127$] &  \\
automobile/truck & 0.49 [0.40, 0.66] & $-0.082$ [$-0.109$, $-0.043$] & $-0.243$ [$-0.272$, $-0.171$] & yes \\
bird/cat & 0.30 [0.20, 0.40] & $-0.039$ [$-0.066$, $-0.014$] & $-0.010$ [$-0.038$, $+0.014$] &  \\
bird/deer & 0.20 [0.18, 0.24] & $-0.072$ [$-0.097$, $-0.049$] & $-0.090$ [$-0.118$, $-0.065$] &  \\
bird/dog & 0.30 [0.24, 0.40] & $+0.041$ [$+0.021$, $+0.063$] & $-0.089$ [$-0.135$, $-0.059$] &  \\
bird/frog & 0.36 [0.33, 0.44] & $-0.039$ [$-0.062$, $-0.016$] & $-0.023$ [$-0.048$, $+0.005$] &  \\
bird/horse & 0.30 [0.24, 0.33] & $+0.230$ [$+0.212$, $+0.248$] & $+0.204$ [$+0.177$, $+0.282$] & yes \\
bird/ship & 1.99 [1.80, 2.43] & $-0.029$ [$-0.052$, $-0.006$] & $+0.021$ [$-0.017$, $+0.109$] &  \\
bird/truck & 2.20 [1.80, 2.43] & $+0.016$ [$-0.008$, $+0.040$] & $-0.031$ [$-0.059$, $+0.004$] &  \\
cat/deer & 0.36 [0.30, 0.44] & $-0.008$ [$-0.032$, $+0.024$] & $+0.087$ [$+0.059$, $+0.111$] &  \\
cat/frog & 0.44 [0.30, 0.66] & $+0.042$ [$-0.013$, $+0.102$] & $-0.086$ [$-0.116$, $-0.061$] &  \\
cat/horse & 0.27 [0.18, 0.36] & $+0.273$ [$+0.250$, $+0.295$] & $+0.139$ [$+0.117$, $+0.163$] &  \\
cat/ship & 1.99 [1.80, 2.43] & $+0.114$ [$+0.089$, $+0.137$] & $+0.233$ [$+0.191$, $+0.267$] & yes \\
cat/truck & 1.80 [1.33, 1.99] & $+0.089$ [$+0.055$, $+0.130$] & $+0.142$ [$+0.112$, $+0.173$] &  \\
deer/dog & 0.24 [0.20, 0.30] & $+0.215$ [$+0.196$, $+0.235$] & $+0.213$ [$+0.175$, $+0.263$] & yes \\
deer/frog & 0.40 [0.36, 0.49] & $+0.040$ [$+0.010$, $+0.065$] & $-0.125$ [$-0.148$, $-0.073$] &  \\
deer/horse & 0.33 [0.27, 0.40] & $+0.170$ [$+0.147$, $+0.193$] & $+0.246$ [$+0.210$, $+0.270$] & yes \\
deer/ship & 2.69 [2.43, 3.63] & $-0.026$ [$-0.049$, $-0.002$] & $-0.068$ [$-0.091$, $-0.045$] &  \\
deer/truck & 2.20 [1.99, 2.43] & $+0.073$ [$+0.046$, $+0.098$] & $+0.092$ [$+0.045$, $+0.118$] &  \\
dog/frog & 0.49 [0.44, 0.54] & $-0.055$ [$-0.079$, $-0.028$] & $-0.138$ [$-0.182$, $-0.119$] &  \\
dog/horse & 0.36 [0.30, 0.44] & $+0.114$ [$+0.089$, $+0.137$] & $+0.038$ [$-0.034$, $+0.059$] &  \\
dog/ship & 2.43 [2.20, 2.97] & $+0.088$ [$+0.065$, $+0.112$] & $+0.076$ [$+0.040$, $+0.106$] &  \\
dog/truck & 1.99 [1.80, 2.20] & $+0.069$ [$+0.043$, $+0.094$] & $-0.092$ [$-0.178$, $-0.028$] &  \\
frog/horse & 0.81 [0.73, 0.89] & $+0.231$ [$+0.211$, $+0.251$] & $+0.210$ [$+0.188$, $+0.232$] & yes \\
frog/ship & 2.69 [2.43, 3.28] & $+0.152$ [$+0.129$, $+0.173$] & $+0.245$ [$+0.223$, $+0.279$] & yes \\
frog/truck & 1.99 [1.80, 2.20] & $+0.175$ [$+0.155$, $+0.198$] & $+0.284$ [$+0.261$, $+0.304$] & yes \\
horse/ship & 2.69 [2.43, 3.28] & $+0.001$ [$-0.029$, $+0.028$] & $-0.109$ [$-0.135$, $-0.020$] &  \\
horse/truck & 1.47 [1.20, 1.63] & $+0.005$ [$-0.025$, $+0.036$] & $+0.040$ [$-0.003$, $+0.234$] &  \\
ship/truck & 1.20 [0.98, 1.47] & $+0.044$ [$+0.009$, $+0.069$] & $-0.020$ [$-0.047$, $+0.009$] &  \\

\bottomrule
\end{tabular}
\end{table}

\begin{table}[h]
\centering
\caption{Pretrained unconditional EDM against the data, per pair (reverse SDE, 14{,}500 trajectories; the reference subsampled to the same size). All values are medians of 100 paired bootstrap resamples, and the interval is the 95\% interval of the ratio. $h$ is raw, before rebalancing.}
\label{tab:pre_full}
\scriptsize
\begin{tabular}{lcccccc}
\toprule
pair & $\sbif$ data & $\sbif$ model & ratio & 95\% interval & $h$ data & $h$ model \\
\midrule
airplane/automobile & 1.39 & 1.15 & 0.82 & [0.68, 1.07] & $+0.007$ & $-0.109$ \\
airplane/bird & 2.15 & 0.91 & 0.47 & [0.24, 0.78] & $+0.018$ & $+0.159$ \\
airplane/cat & 1.79 & 2.56 & 1.36 & [1.13, 1.68] & $-0.087$ & $+0.016$ \\
airplane/deer & 2.88 & 2.15 & 0.71 & [0.44, 1.28] & $-0.013$ & $+0.082$ \\
airplane/dog & 1.90 & 2.28 & 1.20 & [0.97, 1.47] & $-0.061$ & $-0.044$ \\
airplane/frog & 2.72 & 2.15 & 0.79 & [0.58, 1.03] & $-0.232$ & $-0.251$ \\
airplane/horse & 1.39 & 1.31 & 0.91 & [0.73, 1.07] & $-0.072$ & $-0.159$ \\
airplane/ship & 0.50 & 0.58 & 1.16 & [0.93, 1.45] & $-0.086$ & $-0.059$ \\
airplane/truck & 1.07 & 0.94 & 0.88 & [0.72, 1.07] & $-0.033$ & $-0.149$ \\
automobile/bird & 1.90 & 1.31 & 0.73 & [0.61, 0.88] & $-0.040$ & $+0.244$ \\
automobile/cat & 1.48 & 1.58 & 1.06 & [0.83, 1.29] & $-0.143$ & $+0.129$ \\
automobile/deer & 1.68 & 1.68 & 1.00 & [0.83, 1.13] & $-0.136$ & $+0.117$ \\
automobile/dog & 1.90 & 2.02 & 1.06 & [0.89, 1.20] & $-0.093$ & $+0.088$ \\
automobile/frog & 1.79 & 1.90 & 1.06 & [0.88, 1.20] & $-0.164$ & $-0.058$ \\
automobile/horse & 1.58 & 1.68 & 1.00 & [0.88, 1.21] & $-0.075$ & $-0.105$ \\
automobile/ship & 1.15 & 1.07 & 0.94 & [0.64, 1.14] & $-0.079$ & $+0.060$ \\
automobile/truck & 0.43 & 0.58 & 1.25 & [0.86, 1.68] & $-0.081$ & $-0.065$ \\
bird/cat & 0.25 & 0.27 & 1.00 & [0.49, 1.76] & $-0.016$ & $-0.073$ \\
bird/deer & 0.21 & 0.19 & 0.92 & [0.63, 1.28] & $-0.059$ & $-0.056$ \\
bird/dog & 0.37 & 0.37 & 1.00 & [0.73, 1.46] & $+0.044$ & $-0.099$ \\
bird/frog & 0.43 & 0.29 & 0.68 & [0.44, 0.86] & $-0.028$ & $-0.186$ \\
bird/horse & 0.40 & 0.56 & 1.45 & [1.12, 1.88] & $+0.232$ & $-0.081$ \\
bird/ship & 2.42 & 1.58 & 0.69 & [0.48, 1.07] & $-0.020$ & $-0.152$ \\
bird/truck & 2.42 & 1.22 & 0.51 & [0.41, 0.73] & $+0.025$ & $-0.295$ \\
cat/deer & 0.37 & 0.34 & 0.92 & [0.43, 1.58] & $+0.009$ & $+0.053$ \\
cat/frog & 0.37 & 0.40 & 1.04 & [0.64, 1.59] & $+0.053$ & $-0.073$ \\
cat/horse & 0.32 & 0.66 & 2.35 & [1.61, 3.80] & $+0.252$ & $+0.026$ \\
cat/ship & 1.90 & 2.88 & 1.51 & [1.06, 1.87] & $+0.104$ & $+0.002$ \\
cat/truck & 1.48 & 1.39 & 0.94 & [0.69, 1.29] & $+0.089$ & $-0.158$ \\
deer/dog & 0.27 & 0.50 & 1.85 & [1.26, 2.54] & $+0.195$ & $+0.069$ \\
deer/frog & 0.40 & 0.30 & 0.73 & [0.49, 0.96] & $+0.025$ & $-0.160$ \\
deer/horse & 0.37 & 0.21 & 0.61 & [0.31, 1.14] & $+0.152$ & $-0.198$ \\
deer/ship & 2.42 & 1.90 & 0.79 & [0.55, 1.16] & $-0.037$ & $-0.114$ \\
deer/truck & 2.42 & 1.79 & 0.76 & [0.55, 1.10] & $+0.041$ & $-0.195$ \\
dog/frog & 0.46 & 0.66 & 1.44 & [1.16, 1.80] & $-0.054$ & $-0.094$ \\
dog/horse & 0.40 & 0.58 & 1.40 & [0.80, 1.93] & $+0.089$ & $-0.063$ \\
dog/ship & 2.15 & 2.56 & 1.19 & [0.84, 1.43] & $+0.091$ & $+0.081$ \\
dog/truck & 1.90 & 2.28 & 1.20 & [0.89, 1.62] & $+0.072$ & $-0.084$ \\
frog/horse & 0.82 & 0.94 & 1.15 & [0.97, 1.45] & $+0.224$ & $+0.101$ \\
frog/ship & 2.88 & 2.88 & 1.00 & [0.79, 1.19] & $+0.136$ & $+0.159$ \\
frog/truck & 2.15 & 2.72 & 1.27 & [0.94, 1.56] & $+0.163$ & $+0.073$ \\
horse/ship & 2.15 & 2.15 & 0.94 & [0.68, 1.20] & $-0.005$ & $+0.144$ \\
horse/truck & 1.22 & 1.68 & 1.37 & [1.10, 1.77] & $-0.005$ & $+0.001$ \\
ship/truck & 1.31 & 1.15 & 0.87 & [0.64, 1.14] & $+0.023$ & $-0.093$ \\
\bottomrule
\end{tabular}
\end{table}

\end{document}